\useunder{\uline}{\ul}{}
\definecolor{darkblue}{rgb}{0, 0, 0.5}
\newcolumntype{d}{D{.}{.}{-1}}
\newtheorem{theorem}{Theorem}
\numberwithin{theorem}{section}
\newtheorem{corollary}[theorem]{Corollary}
\newtheorem{lemma}[theorem]{Lemma}
\newtheorem{proposition}[theorem]{Proposition}
\theoremstyle{definition}
\newtheorem{definition}[theorem]{Definition}
\theoremstyle{definition}
\newtheorem{remark}[theorem]{Remark}
\DeclareMathOperator*{\argmax}{arg\,max}
\newcommand{\csetp}{\mathcal{C}}
\newcommand{\cinner}{\mathcal{C}_{\epsilon}^{\mathrm{inner}}}
\newcommand{\couter}{\mathcal{C}_{\epsilon}^{\mathrm{outer}}}
\newcommand{\cset}{\mathcal{C}_{k}}
\newcommand{\csetd}{\mathcal{C}_{k,\delta}}
\newcommand{\newpar}[1]{\textbf{{#1}.~}}
\titlespacing*{\subsection}{0pt}{.15\baselineskip}{.15\baselineskip}
\titlespacing*{\subsubsection}{0pt}{.15\baselineskip}{.15\baselineskip}
\titlespacing*{\section}{0pt}{.2\baselineskip}{.2\baselineskip}
\renewcommand\cite{\citep}
\icmltitlerunning{Conformal Prediction Sets with Limited False Positives}
\begin{document}

\twocolumn[
\icmltitle{Conformal Prediction Sets with Limited False Positives}



\icmlsetsymbol{equal}{*}

\begin{icmlauthorlist}
\icmlauthor{Adam Fisch}{mit}
\icmlauthor{Tal Schuster}{google}
\icmlauthor{Tommi Jaakkola}{mit}
\icmlauthor{Regina Barzilay}{mit}
\end{icmlauthorlist}

\icmlaffiliation{mit}{CSAIL, Massachusetts Institute of Technology.}
\icmlaffiliation{google}{Google Research}

\icmlcorrespondingauthor{Adam Fisch}{fisch@csail.mit.edu}

\icmlkeywords{Machine Learning, ICML}

\vskip 0.3in
]



\printAffiliationsAndNotice{}  

\begin{abstract}

We develop a new approach to multi-label conformal prediction in which we aim to output a precise set of promising 
prediction candidates with a bounded number of incorrect answers. Standard conformal prediction provides the ability to adapt to model uncertainty by constructing a calibrated candidate set in place of a single prediction, with guarantees that the set contains the correct answer with high probability. In order to obey this coverage property, however, conformal sets can become inundated with noisy candidates---which can render them unhelpful in practice. This is particularly relevant to practical applications where there is a limited budget, and the cost (monetary or otherwise) associated with false positives is non-negligible. We propose to trade coverage for a notion of precision by enforcing that the presence of incorrect candidates in the predicted conformal sets (i.e., the total number of false positives) is bounded according to a user-specified tolerance. Subject to this constraint, our algorithm then optimizes for a generalized notion of set coverage (i.e., the true positive rate) that allows for any number of true answers for a given query (including zero). We demonstrate the effectiveness of this approach across a number of classification tasks in natural language processing, computer vision, and computational chemistry.

\end{abstract}
\section{Introduction}
\label{sec:introduction}

For many classification problems, returning a set of plausible responses instead of a single prediction is a useful way of representing uncertainty~\cite{gammerman2007hedging, 10.1093/biomet/asu038, Romano2020ClassificationWV}. Conformal prediction~\cite{vovk2005algorithmic} is an increasingly popular method for creating confident {prediction sets} that provably contain the correct answer with high probability. Unfortunately, these guarantees do not come for free; in order to achieve proper coverage on difficult tasks, conformal prediction can often be unable to rule out an overwhelming number of candidates---making their prediction sets large and inefficient. This can make conformal predictors  unusable in settings in which the cost of returning false positive predictions is substantial.

\begin{figure*}[!t]
    \centering
    \small
    \includegraphics[width=\linewidth]{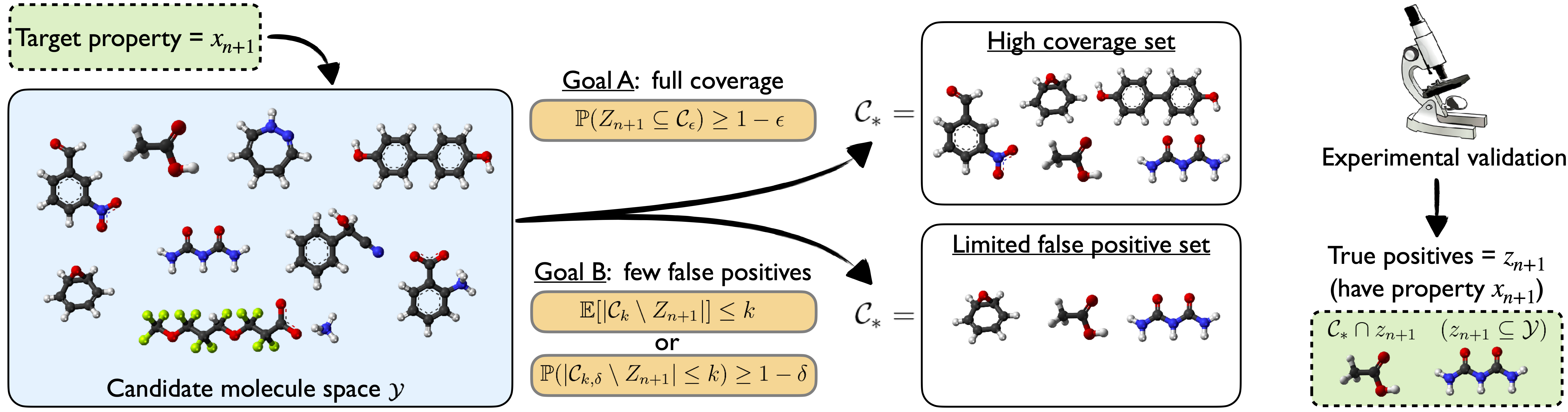}
    \vspace{-12pt}
    \caption{A demonstration of our approach to relaxing standard coverage guarantees (``Goal A'') in favor of rigorous limits on the total number of false positives included in the output $\csetd$ (``Goal B'').  In the illustrative case of in-silico screening for drug discovery, limiting false positives is critical when balancing a budget for experimental validation.}
    \label{fig:screeningexample}
   \vspace{-5pt}
\end{figure*}

As an example, consider in-silico screening for drug discovery (see Figure~\ref{fig:screeningexample}). In-silico screening uses computational tools to search over millions of molecular compounds to identify candidates with desired properties. Any identified candidates are then verified experimentally.  While it is often not necessary to return \emph{all} possible viable candidates (e.g., even identifying just one effective drug can suffice), it is important to respect budgetary constraints by avoiding false positive predictions. Too many false positives can quickly consume available resources (e.g., time, materials, funding, or other assets). This is especially relevant when a valid answer, in this case, an effective drug, might not even exist.

In this work, we develop an approach to creating confident prediction sets that trades off standard coverage guarantees for practical, provable constraints on the total \emph{number of false positives} (FP). In other words, we shift the focus of our conformal guarantees to be on limiting the number of incorrect answers in our outputs, with the understanding that we can potentially fail to recover some proportion of the true answers---i.e., we may obtain a lower \emph{true positive rate} (TPR), which we assume is acceptable for the application.

Concretely, we are interested in a set prediction setting where we have been given $n$ multi-label classification examples $(X_i, Z_i) \in \mathcal{X} \times 2^\mathcal{Y}$, $i = 1, \ldots n$ as calibration data, that have been drawn exchangeably from some  underlying distribution $P_{XZ}$. Under our assumptions, each observation $X_i$ can be associated with any number of correct labels (including zero, in the case of having no answer at all, or one, like standard classification). That is, the response variable $Z_i$ is a subset of the full label space $\mathcal{Y}$. For example, in the above in-silico screening task, $X_{i}$ would be the current property being screened for, $\mathcal{Y}$ the space of all molecular candidates that might have this property, and $Z_i \subseteq \mathcal{Y}$ the  set of molecules that do have it.
Let $X_{n+1} \in \mathcal{X}$ be a new exchangeable test example for which we would like to predict the set of correct labels, $Z_{n+1} \subseteq \mathcal{Y}$. Our goal is to construct a set predictor $\cset(X_{n+1})$ that maximizes recall of $Z_{n+1}$ (i.e., TPR), while limiting the expected number of false positives according to a user-defined tolerance $k \in \mathbb{R}_{>0}$:
\begin{equation}
\label{eq:kfd}
\begin{split}
    &\mathrm{maximize}~\mathbb{E}\left[ \frac{|\cset(X_{n+1}) \cap Z_{n+1}|}{\max(|Z_{n+1}|, 1)} \right] \\
    &\hspace{1cm}\text{s.t.}~~~\mathbb{E}\Big[ {|\cset(X_{n+1}) \setminus Z_{n+1}|} \Big] \leq k.
\end{split}
\end{equation}
As an alternative to bounding the \emph{expected number} of false positives, we can also seek a predictor $\csetd$ that has more direct control of the \emph{probability of exceeding} $k$ false positives:
\begin{equation}
\label{eq:kdeltafd}
\begin{split}
    &\mathrm{maximize}~\mathbb{E}\left[ \frac{|\csetd(X_{n+1}) \cap Z_{n+1}|}{\max(|Z_{n+1}|, 1)} \right] \\
    &\hspace{1cm}\text{s.t.}~~~\mathbb{P}\Big( {|\csetd(X_{n+1}) \setminus Z_{n+1}|} \leq k \Big) \geq 1 - \delta,
\end{split}
\end{equation}
where $\delta \in (0,1)$ is another user-defined tolerance level. Both constructions define different, but  useful, operating conditions; the first is more straightforward (e.g., for the general practitioner), while the second offers a finer, two-parameter level of control. Note that both constraints are marginal over the choice of calibration and test data. 

In order to achieve the desired levels of false positive control, we present an approach that is based on \emph{set classification}, combined with conformal calibration techniques~\cite{shafer2008tutorial, Papadopoulos08, ALVARSSON202142}. Specifically, we use a set nonconformity measure $\mathcal{F} \colon \mathcal{X} \times 2^\mathcal{Y} \rightarrow \mathbb{R}$ to score candidate output sets, $\mathcal{S} \in 2^\mathcal{Y}$, for a given input $x \in \mathcal{X}$.  Intuitively, a \emph{high} nonconformity score (e.g., loss) should reflect the confidence that the candidate set might contain a \emph{high} number of false positives, and vice-versa. We learn this function from separate multi-label classification training data. As enumerating and scoring all possible candidate sets is combinatorially hard, we instead adopt the nested conformal prediction strategy of \citet{Gupta2019NestedCP}, where we greedily construct prediction sets  using a best-first strategy that adds top-ranked individual labels to a growing, nested output set $\mathcal{S}$. We stop when its nonconformity score, $\mathcal{F}(x, \mathcal{S})$,  exceeds a calibrated threshold---that we find based on our desired false positive constraints. This greedy approach both allows us to scale to larger label spaces $\mathcal{Y}$ (i.e., where there are many candidate labels that choose from when composing the prediction set), and to leverage powerful theory for calibrating expectations of monotonic losses for nested set predictors~\cite{Gupta2019NestedCP, bates-rcps, arxiv-note}.

In summary, our main contributions are as follows:
\begin{itemize}[leftmargin=*, noitemsep]
    \vspace{-5pt}
    \item A theoretical adaptation of conformal prediction that provides rigorous false positive control instead of coverage;\vspace{3pt}
    \item A simple and effective strategy for constructing valid output sets with empirically high true positive rates;\vspace{3pt}
    \item A demonstration of the practical utility of our framework across a range of diverse classification tasks.
\end{itemize}
\section{Related work}
\label{sec:related}

\newpar{Uncertainty estimation}  A large body of work in estimating model uncertainty focuses on {calibrating} model-based conditional probabilities,  $p_\theta(\hat{y}_{n+1}|x_{n+1})$, such that the accuracy, $y_{n+1} = \hat{y}_{n+1}$, is equal to the estimated probability~\cite[][]{Brier1950VERIFICATIONOF,VerificationofProbabilisticPredictionsABriefReview, niculescu2005predicting, pmlr-v80-kuleshov18a, kumar-verified-2019, pmlr-v89-vaicenavicius19a}. In theory, these estimates could be used to create prediction sets with few false positives, but they are not always  accurate~\cite{pmlr-v70-guo17a, ashukha2020pitfalls,  hirschfeld2020uncertainty}. In a similar vein, Bayesian formalisms  quantify uncertainty via computing the posterior predictive distribution over model parameters~\cite[][]{neal1996bayesian, graves2011vi, hernandez2015bnn, gal2016dropout}. However, the quality of these methods can vary depending on the suitability of the presumed prior and on  approximation error.

\newpar{Conformal prediction} As introduced in \S\ref{sec:introduction}, conformal prediction~\cite{vovk2005algorithmic} provides a finite-sample, distribution-free method for obtaining prediction sets $\mathcal{C}$ with guarantees on the event $\mathbf{1}\{Y_{n+1} \in \mathcal{C}(X_{n+1})\}$. Most efforts in CP focus on improving the predictive efficiency, $\mathbb{E}[|\mathcal{C}(X_{n+1})|]$, of the conformal sets~\cite[][]{vovk2016criiteria, sadinle-least-2019, Romano2020ClassificationWV, angelopoulos2021sets, fisch2021admission,fisch2021fewshot, hoff-bayes-2021}. As  coverage is guaranteed by design, improving efficiency will naturally lead to more precise sets with fewer false positives---but not to a specifiable level. \citet{cauchois2020knowing} develop a conformal approach to multi-label classification that can guarantee that the prediction set  {only} contains true labels (i.e., FP = 0), but does not offer fine-grained control. Most relevant to our work, \citet{bates-rcps} develop a flexible framework for controlling the {risk}, $\mathbb{E}[\mathcal{L}(Y, \mathcal{T}(X))]$, of a set-valued predictor $\mathcal{T}$ with an arbitrary loss function $\mathcal{L}$---as long the loss respects a monotonic \emph{nesting} property, $\mathcal{S} \subset \mathcal{S}' \Rightarrow \mathcal{L}(\mathcal{S}) \geq \mathcal{L}(\mathcal{S}')$, for any two prediction sets $\mathcal{S}$ and $\mathcal{S}'$. The calibration strategy we use here for marginal expectations is based on an  extension  in \citet{arxiv-note}.
Recently, \citet{anastasios-learning-2021} proposed methods to rigorously control non-monotonic losses, including the related {false discovery rate} (FDR), which normalizes the number of false positives over the size of the prediction set. However, as most of our target applications have relatively few true positives, FDR control can  lead to many empty predictions (making controlling total false positives a more natural fit for this work, see Appendix~\ref{app:practical}). Finally, though we focus on conformal approaches, our methods are tightly connected to the broader literature surrounding distribution-free calibration~\cite[][]{NIPS2003_10c66082, NIPS2015_a9a1d531, Vovk2014VennAbersP, Gupta2019NestedCP, NEURIPS2020_26d88423, barber-2020}.

\newpar{Multiple testing} Controlling the number of false positives/discoveries over a collection of hypothesis tests is  well-studied~\cite[][]{dunn-multiple-1961, Benjamini1995, lehmann-fwer-2005, romano-control-2007}. In fact, the objectives expressed in Eqs.~\eqref{eq:kfd} and \eqref{eq:kdeltafd} are established concepts in statistics---i.e., PFER, the per-family error rate, and $k$-FWER, the familywise error rate \cite{spjotvoll-1972, romano-control-2007}.  Recently, FDR control has also been studied for outlier detection in a conformal inference setting~\cite{bates2021testing}. Classic approaches operate over p-values for each hypothesis test that have specific dependency structures (e.g., independent or positively dependent), or otherwise use more conservative corrections. Though similar, our multi-label setting is slightly different from standard multiple testing in that there is both (1) an unknown dependency structure between candidate labels for the same query, but also (2) an extra layer of exchangeability over the $n + 1$ queries. Our approach is able to ignore (1) by leveraging (2) within a conformal calibration framework.

\newpar{Selective classification} In {selective classification}~\cite{elyaniv2010selective},  models can abstain from answering. In particular, \citet{geifman2017selective} propose a strategy for finding classifiers with specific selective $0/1$ risks (i.e., the expected accuracy over \emph{answered} examples). In our setting, this is analogous to controlling  false positives using $k \approx 0$. If uncertain, the model would have to ``abstain'' by outputting an empty set. Our framework generalizes this behavior to other types of constraints for any positive $k$.
\section{Background}
We begin with a brief review of conformal prediction~\cite[see][]{shafer2008tutorial}.
Here, and in the rest of the paper, upper-case letters ($X$) denote random variables; lower-case letters ($x$) denote constants, and script letters ($\mathcal{X}$) denote sets, unless otherwise specified.
Proofs are in Appendix~\ref{app:proofs}.

Given a new example $x$, for every candidate label $y \in \mathcal{Y}$ standard conformal classification  (where there is one correct output) either accepts or rejects the null hypothesis that the pairing $(x, y)$ is correct. The test statistic for this test is a \emph{nonconformity measure}, $\mathcal{M}\left((x, y), \mathcal{D}\right)$, where $\mathcal{D}$ is a dataset of exchangeable, labeled examples (as is $(x, y_{\mathrm{true}})$). Informally, a lower value of $\mathcal{M}$ reflects that point $(x, y)$ ``conforms'' to $\mathcal{D}$, whereas a higher value of $\mathcal{M}$ reflects that $(x, y)$ is atypical relative to $\mathcal{D}$. A practical choice for $\mathcal{M}$ could be a model-based loss, e.g.,  $-\log p_\theta(y | x)$, where $\theta$ is a model fit to $\mathcal{D}$. For conformal prediction to work, it is important to ensure that $\mathcal{M}$ preserves the exchangeability over $\mathcal{D} \cup (x, y_{\mathrm{true}})$. One such way is to learn $\mathcal{M}$ on separate data. Split conformal prediction~\cite{Papadopoulos08} uses a proper training set $\mathcal{D}_{\mathrm{train}}$ to learn a fixed  $\mathcal{M}$ that is not modified during calibration or prediction. This trivially preserves exchangeability of the calibration and test points, and is a computationally efficient strategy (which we follow).

To construct a {prediction} set for the new test point $x$, the conformal classifier outputs all $y$ for which the null hypothesis (that pairing $(x, y)$ is correct) is not rejected. This is achieved by comparing the scores of the test candidate pairs to the scores computed over  $n$ calibration examples.
\begin{theorem}[Split CP, \citet{vovk2005algorithmic, Papadopoulos08}]
\label{thm:conformalprediction}
Assume that examples $(X_i, Y_i) \in \mathcal{X} \times \mathcal{Y}$, $i=1,\ldots, n+1$ are exchangeable. For a fixed nonconformity measure $\mathcal{M}$, let random variable $V_{i} = \mathcal{M}(X_i, Y_i)$ be the nonconformity score of $(X_i, Y_i)$. For $\epsilon \in (0, 1)$, define the prediction (based on the first $n$ examples) at  $x \in \mathcal{X}$ as
\begin{align}
\label{eq:csetconstruction}
&\mathcal{C}_\epsilon(x) := \\ &\quad\big \{ y \in \mathcal{Y} \colon 
    \mathcal{M}(x, y) \leq \mathrm{Quantile}(1 - \epsilon;\, V_{1:n} \cup \{\infty\}) \big\}.\nonumber
\end{align}
Then $\mathcal{C}_\epsilon(X_{n+1})$ satisfies $\mathbb{P}(Y_{n+1} \in \mathcal{C}_{\epsilon}(X_{n+1})) \geq 1 - \epsilon$.
\end{theorem}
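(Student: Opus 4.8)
The plan is to reduce the coverage event to a statement purely about the scalar nonconformity scores, and then invoke a standard exchangeability-plus-quantile argument. First I would observe that, applying the definition of $\mathcal{C}_\epsilon$ in Eq.~\eqref{eq:csetconstruction} at $x = X_{n+1}$ with the candidate $y = Y_{n+1}$, the coverage event $\{Y_{n+1} \in \mathcal{C}_\epsilon(X_{n+1})\}$ coincides \emph{exactly} with the event $\{V_{n+1} \le \mathrm{Quantile}(1-\epsilon;\, V_{1:n} \cup \{\infty\})\}$, since $\mathcal{M}(X_{n+1}, Y_{n+1}) = V_{n+1}$ by definition of $V_{n+1}$. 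So it suffices to lower-bound the probability of this latter event. Next I would note that because $\mathcal{M}$ is a \emph{fixed} (split) nonconformity measure — learned on separate data and not refit using the calibration or test points — the map $(X_i, Y_i) \mapsto V_i$ is the same deterministic measurable function for every $i$. Applying a common function coordinatewise preserves exchangeability, so $V_1, \ldots, V_{n+1}$ are exchangeable real-valued random variables; all the distributional information I need about $V_{n+1}$ relative to $V_{1:n}$ flows from this.

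The core of the argument is then the quantile inequality: for exchangeable $V_1, \ldots, V_{n+1}$, one has $\mathbb{P}\!\left(V_{n+1} \le \mathrm{Quantile}(1-\epsilon;\, V_{1:n}\cup\{\infty\})\right) \ge 1-\epsilon$. I would prove this in two moves. (a) Replacing $V_{n+1}$ by $\infty \ge V_{n+1}$ can only weakly raise any empirical quantile, so $\mathrm{Quantile}(1-\epsilon;\, V_{1:n}\cup\{\infty\}) \ge \mathrm{Quantile}(1-\epsilon;\, V_{1:n+1})$ pointwise; hence it is enough to show $\mathbb{P}(V_{n+1} \le \mathrm{Quantile}(1-\epsilon;\, V_{1:n+1})) \ge 1-\epsilon$. (b) Write $q$ for the $(1-\epsilon)$-quantile of the full multiset $V_{1:n+1}$, i.e.\ (under the convention that $\mathrm{Quantile}(\beta;\cdot)$ is the smallest value whose empirical CDF is $\ge \beta$) the $\lceil (1-\epsilon)(n+1)\rceil$-th smallest value, where $\lceil (1-\epsilon)(n+1)\rceil \le n+1$ always. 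Then $\{V_{n+1} \le q\}$ is implied by the event that the rank of $V_{n+1}$ among $V_{1:n+1}$ is at most $\lceil (1-\epsilon)(n+1)\rceil$, and by exchangeability this rank is (sub-)uniformly distributed on $\{1,\dots,n+1\}$, so the event has probability at least $\lceil (1-\epsilon)(n+1)\rceil / (n+1) \ge 1-\epsilon$. Chaining (a), (b), and the reduction above yields $\mathbb{P}(Y_{n+1} \in \mathcal{C}_\epsilon(X_{n+1})) \ge 1-\epsilon$.

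The main obstacle is more bookkeeping than genuine difficulty: handling ties among the scores, and the boundary case where the quantile equals $\infty$ (which occurs exactly when $(1-\epsilon)(n+1) > n$, i.e.\ $\epsilon < 1/(n+1)$, in which regime the event holds trivially because $V_{n+1} \le \infty$ always). For ties I would either break them with auxiliary i.i.d.\ uniform noise — which leaves every $\le$ comparison unchanged in distribution and only relaxes the quantile — or argue directly that ties can only help the $\le$ comparison, so the rank bound still goes through in the conservative direction. I would also state the quantile convention explicitly up front so that the identification of $\mathrm{Quantile}(1-\epsilon;\,V_{1:n+1})$ with the $\lceil(1-\epsilon)(n+1)\rceil$-th order statistic, and the appended-$\infty$ monotonicity in step (a), are both exact.
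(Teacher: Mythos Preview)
Your proposal is correct and follows essentially the same approach as the paper's proof: reduce the coverage event to $V_{n+1} \le \mathrm{Quantile}(1-\epsilon;\, V_{1:n}\cup\{\infty\})$, note that exchangeability of the data implies exchangeability of the scores, and then use a rank/quantile argument to conclude that this event has probability at least $\lceil(1-\epsilon)(n+1)\rceil/(n+1) \ge 1-\epsilon$. The only cosmetic difference is that you split the quantile step into (a) the monotonicity reduction from $V_{1:n}\cup\{\infty\}$ to $V_{1:n+1}$ and (b) the rank argument on $V_{1:n+1}$, whereas the paper argues directly on $V_{1:n}\cup\{\infty\}$; your extra bookkeeping on ties and the $\epsilon < 1/(n+1)$ boundary case is more than the paper provides but is in the same spirit.
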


\begin{remark}
\citet{cauchois2020knowing} extend the single label conformal prediction formulation to the multi-label case, where $Z_{n+1} \subseteq \mathcal{Y}$, by predicting two sets $\cinner, \couter \subseteq \mathcal{Y}$ that fully sandwich $Z_{n+1}$, i.e., they guarantee $\mathbb{P}(\cinner(X_{n+1}) \subseteq Z_{n+1} \subseteq \couter(X_{n+1})) \geq 1 - \epsilon$.
\end{remark}

The motivation for our work is evident from Eq.~\eqref{eq:csetconstruction}: if we are unable to reject most candidates based on their nonconformity scores, then  $\mathcal{C}_\epsilon$ can contain many false positives.

\section{\mbox{Set predictions with limited false positives}}

We now introduce our strategy for limiting the number of false positives that are contained in our output sets.  To briefly remind the reader of our setting,  we assume that we have been given $n$ exchangeable multi-label classification examples, $(X_i, Z_i) \in \mathcal{X} \times 2^\mathcal{Y}$, $i = 1, \ldots n$ as calibration data, that are drawn from a distribution $P_{XZ}$. We follow split conformal prediction, and assume that any training data used is distinct from this calibration data. The response $Z_i$ is treated as a generalized set of correct labels for input $X_i$, and is a subset of $\mathcal{Y}$. For example, in the in-silico screening task from \S\ref{sec:introduction}, $X_{i}$ is the current target property being screened for, $\mathcal{Y}$ is the space of all molecular candidates, and $Z_i \subseteq \mathcal{Y}$ is the set of molecules that have that property.

For a prediction set $\csetp(x) \subseteq \mathcal{Y}$ evaluated at a point $x \in \mathcal{X}$ with label set $z \subseteq \mathcal{Y}$, we define the \emph{true positive proportion} (TPP) as the ratio of correct labels that are recovered:
\begin{equation}
\label{eq:TPR}
    \mathrm{TPP}(z, \csetp(x)) := \frac{|\csetp(x) \cap z|}{\max(|z|, 1)} 
\end{equation}
(note that $\mathrm{TPR} := \mathbb{E}[\mathrm{TPP}]$), and the number of \emph{false positives} (FP) as the total count of incorrect labels in $\csetp(x)$:
\begin{equation}
    \mathrm{FP}(z, \csetp(x)) := |\csetp(x) \setminus z|.
\end{equation}
Our goal, as stated in \S\ref{sec:introduction}, is to maximize the expected TPP, while constraining  the FP in either of two ways:
\begin{definition}[$k$-FP validity] A conformal classifier producing random test prediction
$\cset(X_{n+1})$ is $k$-FP valid if it satisfies $\mathbb{E}[\mathrm{FP}(Z_{n+1}, \cset(X_{n+1}))] \leq k$.
\end{definition}
\begin{definition}[$(k, \delta)$-FP validity] A conformal classifier producing random test prediction $\csetd(X_{n+1})$ is  $(k, \delta)$-FP valid if it satisfies $\mathbb{P}(\mathrm{FP}(Z_{n+1}, \csetd(X_{n+1})) \leq k) \geq 1 - \delta$.
\end{definition}

\subsection{An oracle set predictor}
To motivate our approach, imagine an \emph{oracle} with access to $P_{Z\mid X}$, the conditional distribution of the multi-label set $Z$ given the input $X$. Given this information, for any input $x \in \mathcal{X}$ and candidate set $\mathcal{S} \in 2^{\mathcal{Y}}$, in theory such an oracle would be able to exactly calculate both the expectation and the conditional distribution of the number of false (and true) positives in $\mathcal{S}$ given $x$. In order to maximize the TPR while meeting $k$-FP and $(k, \delta)$-FP validity, it could then yield:
\begin{align}
&\mathcal{C}_{k}^{\mathrm{oracle}}(x) := \label{eq:oraclek}\\
&~\underset{\mathcal{S} \in 2^\mathcal{Y}}{\argmax}~\big\{\mathbb{E}[{\mathrm{TPP}}(Z, \mathcal{S}) \mid x] \colon \mathbb{E}[{\mathrm{FP}}(Z, \mathcal{S}) \mid x] \leq k \big\}\nonumber \\[5pt]
&\mathcal{C}_{k, \delta}^{\mathrm{oracle}}(x) := \label{eq:oraclekd}\\
&~\underset{\mathcal{S} \in 2^\mathcal{Y}}{\argmax}~\big\{\mathbb{E}[{\mathrm{TPP}}(Z, \mathcal{S}) \mid x] \colon \mathbb{P}({\mathrm{FP}}(Z, \mathcal{S}) \mid x] > k) < \delta \big\}\nonumber
\end{align}
where ties are settled by smaller set size. Of course, computing this oracle is not possible, as $P_{Z \mid X}$ is unknown. Furthermore, enumerating all sets $\mathcal{S} \in 2^\mathcal{Y}$ is infeasible for large $\mathcal{Y}$. Instead, in the following sections we develop a practical approach for roughly approximating the oracle's behavior with three main components: 
\begin{enumerate}[leftmargin=*, noitemsep]
    \item A \textbf{set function} $\mathcal{F} \colon \mathcal{X} \times \smash{2^\mathcal{Y}} \rightarrow \mathbb{R}$ that directly generates a score for a candidate {set} $\mathcal{S}$ given $x$ that is predictive of either $\mathbb{E}[{\mathrm{FP}}(Z, \mathcal{S}) \mid x]$ or $\mathbb{P}({\mathrm{FP}}(Z, \mathcal{S}) \mid x] > k)$.\vspace{5pt}
    \item A \textbf{calibrated search strategy} for exploring a {tractable} number of candidate sets, and identifying {valid} sets satisfying our constraints using predictions from $\mathcal{F}$;\vspace{5pt}
    \item A \textbf{selection policy} for picking a final output set. 
\end{enumerate}
Wherever possible, our proposed method will try to  balance simplicity and efficiency with effectiveness. Theoretically, however,  the framework  it follows is model-agnostic.
\definecolor{darkgreen}{rgb}{0.31, 0.47, 0.26}
%
\begin{figure*}[t!]
    \label{app:algorithm}
    \centering
    \begin{minipage}{1\linewidth}
    \begin{algorithm}[H]
    \footnotesize
    \caption{ \small Pseudocode for conformal prediction with limited false positives (in expectation case, see Eq.~\eqref{eq:kfd}).}
    \label{alg:psuedocode}
    \textbf{Definitions:} \textcolor{black}{$x_{\mathrm{n+1}}$ is a test point, $\mathcal{D}_\mathrm{train}$ is a training set, $\mathcal{D}_\mathrm{cal}$ is a calibration set, $k$ is the tolerance, and $B$ is a  parameter for considering only the top  individually ranked candidates, $ y_c \in \mathcal{Y}$. $\mathrm{LikelihoodModel}$ is an abstract model that estimates individual label likelihood  for ranking and set item  featurization. $\mathrm{SetModel}$ is an abstract model that estimates FP (we use $\mathrm{DeepSets}$).}
    \begin{algorithmic}[1]
    \STATE{\textcolor{darkgreen}{{\# Using a training set, fit both a  $\mathrm{LikelihoodModel}$ $p_\theta$ and a $\mathrm{SetModel}$ $\mathcal{F}$ (\S\ref{sec:deepsets}).}}}
    \FUNCTION{\textsc{train}($\mathcal{D}_{\mathrm{train}}$)}

        \STATE{$\mathcal{D}_{\mathrm{train}}^{(1)}, \mathcal{D}_{\mathrm{train}}^{(2)}$}{~$\leftarrow$~\textsc{split}($\mathcal{D}_{\mathrm{train}}$)}
        \hfill{\textcolor{darkgreen}{{\# Split the training data into two disjoint sets.}}}
       
        \STATE{$p_\theta(y_c \in Z \mid x)$}{~$\leftarrow$~\textsc{fit}($\mathrm{LikelihoodModel}, \mathcal{D}_{\mathrm{train}}^{(1)}$)} 
        \hfill{\textcolor{darkgreen}{{\# Use one set to estimate individual likelihoods, $p_\theta(y_c \in Z \mid x)$.}}}
        
        \STATE{${\mathcal{F}}(x, \mathcal{S})$}{~$\leftarrow$~\textsc{fit}($\mathrm{SetModel}, p_\theta, \mathcal{D}_{\mathrm{train}}^{(2)}$)} 
        \hfill\textcolor{darkgreen}{{\# Use the other (smaller) set to learn the $\mathrm{FP}$ set function, $\mathcal{F}(x, \mathcal{S})$.}}

        \STATE{\textbf{return}\ $p_\theta$, $\mathcal{F}$}
    \ENDFUNCTION
    \vspace{3pt}
    
    \STATE{\textcolor{darkgreen}{{\# Using the trained $p_\theta$ and $\mathcal{F}$ models, find a set score threshold $t_k$ on a calibration set that achieves $k$-FP validity (\S\ref{sec:calibration}).}}}
    \FUNCTION{\textsc{calibrate}($p_\theta$, $\mathcal{F}$, $\mathcal{D}_{\mathrm{cal}}$, $k$, $B$)}
        \STATE{$ \mathcal{T}_{\mathrm{cal}} = \{\}$}
         \FOR{$(x_i, z_i) \in \mathcal{D}_\mathrm{cal}$}
            \STATE {$\{y_{i,\pi(1)}, \ldots, y_{i,\pi(B)}\}$} {$\leftarrow  \textsc{sort}(\mathcal{Y}, p_{\theta}(y_c \in Z_i \mid x_i ))_{1:B}$\phantom{$\mathcal{D}^{(1)}$}}
            \hfill\textcolor{darkgreen}{{\# Rank top $B$ candidates by individual likelihood.}}
            \STATE {$\{\mathcal{S}_{i,1}, \ldots, \mathcal{S}_{i,B}\}$} {$\leftarrow  \{y_{i,\pi(1:j)} \colon j \in \{1, \ldots, B\}\}$\phantom{$\mathcal{D}^{(1)}$}}
            \hfill \textcolor{darkgreen}{{\# Construct nested sets using this ordering.}}
            
            \STATE {$\{v_{i,1}, \ldots, v_{i,B}\}$} {$\leftarrow  \{ \mathcal{F}(x_i, \mathcal{S}_{i,1}), \ldots \mathcal{F}(x_i, \mathcal{S}_{i,B})\}$\phantom{$\mathcal{D}^{(1)}$}}
            \hfill \textcolor{darkgreen}{{\# Compute nonconformity scores using $\mathcal{F}$.}}
            
            \STATE{$\mathrm{FP_{max}}(x_i, z_i, t)$}{$~\leftarrow~ \textsc{cache}(x_i, z_i, v_{i,1:B}, \mathcal{S}_{i,1:B})$\phantom{$\mathcal{D}^{(1)}$}}
            \hfill \textcolor{darkgreen}{{\# Cache dependent variables for $\mathrm{FP_{max}}(x_i, z_i, t)$.}}

            \STATE {$\mathcal{T}_{\mathrm{cal}}$}{$~\leftarrow~ \mathcal{T}_{\mathrm{cal}} \cup \{\mathrm{FP_{max}}(x_i, z_i, t)\}$\phantom{$\mathcal{D}^{(1)}$}}
            \hfill\textcolor{darkgreen}{{\# Append cached $\mathrm{FP_{max}}(x_i, z_i, t)$ to the calibration set.}}
        \ENDFOR
        
        \STATE{$t_k$}{$~\leftarrow~ \textsc{find\_threshold}(\mathcal{T}_{\mathrm{cal}}, B, k)$ \hfill\textcolor{darkgreen}{{\# Use Eq.~\eqref{eq:caltk} to find a $k$-FP valid set score threshold.}}}
        
        \STATE{\textbf{return}\ $t_k$}
    \ENDFUNCTION
    
    \vspace{3pt}
    \STATE{\textcolor{darkgreen}{{\# Using trained $p_\theta$ and $\mathcal{F}$ models and calibrated threshold $t_k$, return a TPR-maximizing prediction set for test point $x_{n+1}$ (\S\ref{sec:greedy}).}}}
    \FUNCTION{\textsc{predict}($x_{n+1}$, $p_\theta$, $\mathcal{F}$, $t_k$, $B$)}
        \STATE{\textcolor{darkgreen}{\# Repeat lines 12-14  to compute $\mathcal{S}_{n+1, 1:B}$ and ${v}_{n+1, 1:B}$.\phantom{$\mathcal{D}^{(1)}$)}}}

        \STATE {$\mathcal{J}$}{$~\leftarrow~ \{j \in \{1, \ldots, B\} \colon v_{n+1,j} <  t_k  \}$\phantom{$\mathcal{D}^{(1)}$}}
        \hfill \textcolor{darkgreen}{{\# Identify indices of candidate sets that pass threshold $t_k$.}}

        \STATE {$\cset(x_{n+1})$}{$~\leftarrow~ \mathcal{S}_{n+1, \max \mathcal{J}}$\phantom{$\mathcal{D}^{(1)}$}}
        \hfill \textcolor{darkgreen}{{\# Choose the largest sized set among filtered candidates.}}
        
        \STATE{\textbf{return}\ $\cset(x_{n+1})$}
    \ENDFUNCTION
    \end{algorithmic}
    \end{algorithm}
    \end{minipage}
    \vspace{-12pt}
\end{figure*}

\subsection{Scoring candidate sets with set functions}
\label{sec:deepsets}

We choose to model $\mathcal{F}$ using $\mathrm{DeepSets}$~\cite{zaheer2017sets}. $\mathrm{DeepSets}$ is a popular method which is known to be a universal approximator for continuous set functions, which makes it a natural choice for our purpose.
Let $\{\phi(x, y_1), \ldots, \phi(x, y_{|\mathcal{S}|})\}$ featurize a candidate set $\mathcal{S} \subseteq \mathcal{Y}$, where $\phi(x, y_c) \in \mathbb{R}^d$ is a function of $(x, y_c)$, for $y_c \in \mathcal{S}$. 
In practice, we find that taking $\phi(x, y_c)$ to be an estimate of $p_\theta(y_c \in Z \mid x)$, the marginal likelihood of $y_c$ being a correct label, performs  well and is simple to implement. These  one-dimensional prediction scores can be provided by any base model.\footnote{This is comparable to the $1$-$d$ features used by Platt scaling.} For example, in our in-silico screening task, we define $\phi$ using a directed MPNN~\cite{chemprop} that independently predicts the probability of an individual molecule having the properties targeted by the screen, or not. Given $\phi$, the $\mathrm{DeepSets}$ model is  defined by
\vspace{2pt}
\begin{equation}
\label{eq:deepsets}
    \Psi(x, \mathcal{S}) := \mathrm{softmax} \Big( \mathrm{dec} \Big(\sum_{y_c \in \mathcal{S}} \mathrm{enc}(\phi(x, y_c))\Big) \Big),
\end{equation}
where $\mathrm{enc}(\cdot)$ and $\mathrm{dec}(\cdot)$ are neural encoder and decoder models, and $\mathrm{softmax}(\cdot)$ is taken over the range of possible  false positives, $\{0, \ldots, |\mathcal{S}|\}$. $\Psi$ is trained to predict the total number of false positives in $\mathcal{S}$ via cross entropy, using labeled sets sampled from held-out training data, separate from the split used to learn $p_\theta$ (used for $\phi$). We then compute $\mathcal{F}_k$ and $\mathcal{F}_{k, \delta}$ (for either $k$-FP or $(k, \delta)$-FP validity) as
\begin{align}
    \mathcal{F}_k(x, \mathcal{S}) &:= \sum_{\eta=0}^{|\mathcal{S}|}\eta \cdot \Psi(x, \mathcal{S})_\eta \label{eq:fk} \\
    \mathcal{F}_{k, \delta}(x, \mathcal{S}) &:= 1 - \hspace{-7pt}\sum_{\eta=0}^{\min(k, |\mathcal{S}|)}\hspace{-7pt}\Psi(x, \mathcal{S})_\eta, \label{eq:fkd}
\end{align}
where $\Psi(x, \mathcal{S})_\eta$ denotes the $\eta$-th index of the $\mathrm{softmax}$ (i.e., the estimated probability that FP = $\eta$). In the next sections, we will now only refer to $\mathcal{F}$ as a general function. 

\subsection{Searching for valid candidate sets}
\label{sec:calibration}
Although our set predictor $\mathcal{F}$ is trained to model either the expected FP or its CDF, it is not necessarily accurate. If $\mathcal{F}$ were simply substituted into Eq.~\eqref{eq:oraclek} or Eq.~\eqref{eq:oraclekd}, it may not produce valid set predictions. To account for this mismatch, we must carefully calibrate a threshold for accepting  candidate sets based on $\mathcal{F}$. At the same time, we also must efficiently search the combinatorial space of candidate sets.

To efficiently calibrate our predictor, we cast our approach into a form of \emph{nested} conformal prediction~\cite{Gupta2019NestedCP}. First, we greedily identify a sequence of {nested}  candidate sets, $\varnothing \subset \mathcal{S}_1 \subset \mathcal{S}_2 \subset \ldots \subset \mathcal{S}_j$, by ranking individual labels $y_c \in \mathcal{Y}$ according to some auxiliary model,  and including them one by one into the growing output set $\mathcal{S}_{j+1}$.
Notice that, by construction, the number of false positives contained in $\mathcal{S}_j$ is non-decreasing in index $j$, i.e.,
\begin{equation}
    j \leq j' \Longrightarrow \mathrm{FP}(z, \mathcal{S}_j) \leq \mathrm{FP}(z, \mathcal{S}_{j'}).
\end{equation}
In practice, we find that ranking individual labels by their estimated marginal likelihoods of being true positives, i.e., $p_{\theta}(y_c \in Z \mid x)$---the same model used in \S\ref{sec:deepsets}---performs well and avoids the overhead of training an additional scoring model. Importantly, for further efficiency (elaborated on in Remark~\ref{rm:large}) we only consider sets up to a maximum size $B \leq |\mathcal{Y}|$, where $B$ is a hyper-parameter that we can set. 

Next, we compute a set nonconformity score $v_j$ (assumed to be finite) for each candidate set $\mathcal{S}_j$ using $\mathcal{F}$, where
\begin{equation}
\label{eq:vscore}
\vspace{-2pt}
    v_j := \mathcal{F}(x, \mathcal{S}_j).
\end{equation}
Finally, we define the worst-case number of false positives over all nested candidate sets $\mathcal{S}_{1:B}$ having nonconformity scores less than $t$ (given the input $x$ with label set $z$) as
\begin{equation}
\label{eq:fpmax}
   \mathrm{FP_{max}}(x, z, t) := \max\big\{\mathrm{FP}(z, \mathcal{S}_j) \colon v_j < t\big\}. 
\end{equation}
If this set is empty, then $\mathrm{FP_{max}}$ is $0$. Due to our nested construction, this is also simply the number of false positives contained in the largest set $\mathcal{S}_j$ satisfying $v_j < t$.  It is simple to show that $\mathrm{FP_{max}}$ is non-decreasing in $t$, as stated below.
\begin{lemma}[Monotonicity]
\label{lem:fpmax}
For sets $\mathcal{S}_j$ and scores $v_j$ and $\mathrm{FP_{max}}(x, z, t)$ as defined in Eqs.~\eqref{eq:vscore} and \eqref{eq:fpmax}, respectively,
\vspace{-5pt}
\begin{equation}
t \leq t' \Longrightarrow \mathrm{FP_{max}}(x, z, t) \leq \mathrm{FP_{max}}(x, z, t').
\end{equation}
\end{lemma}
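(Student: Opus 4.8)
The plan is to reduce the claim to the elementary fact that the maximum of a collection of nonnegative numbers is monotone under set inclusion. First I would introduce the index set $A_t := \{j \in \{1,\dots,B\} \colon v_j < t\}$ of candidate sets whose nonconformity score passes a threshold $t$, so that by definition $\mathrm{FP_{max}}(x,z,t) = \max\{\mathrm{FP}(z,\mathcal{S}_j) \colon j \in A_t\}$ when $A_t \neq \varnothing$, and $\mathrm{FP_{max}}(x,z,t) = 0$ otherwise.

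The key observation is that $t \mapsto A_t$ is monotone with respect to inclusion: if $t \leq t'$ and $v_j < t$, then certainly $v_j < t'$, hence $A_t \subseteq A_{t'}$. I would then split into two cases. If $A_t = \varnothing$, then $\mathrm{FP_{max}}(x,z,t) = 0$, and since $\mathrm{FP}(z,\cdot)$ is a cardinality and therefore nonnegative, $\mathrm{FP_{max}}(x,z,t') \geq 0 = \mathrm{FP_{max}}(x,z,t)$. If $A_t \neq \varnothing$, then $A_{t'} \supseteq A_t$ is also nonempty, and taking the maximum of the same nonnegative quantities $\mathrm{FP}(z,\mathcal{S}_j)$ over the larger index set $A_{t'}$ can only increase (or leave unchanged) the value, so $\mathrm{FP_{max}}(x,z,t) \leq \mathrm{FP_{max}}(x,z,t')$. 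This covers all cases and yields the lemma.

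Optionally I would add a sentence noting that the nesting property $j \leq j' \Rightarrow \mathrm{FP}(z,\mathcal{S}_j) \leq \mathrm{FP}(z,\mathcal{S}_{j'})$ implies the maximum defining $\mathrm{FP_{max}}$ is in fact attained at the largest index of $A_t$, matching the informal description given just before the lemma; but this refinement is not needed for the inequality itself.

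There is no substantive obstacle here: the statement is essentially a bookkeeping fact. The only point requiring a moment of care is the empty-set convention in the definition of $\mathrm{FP_{max}}$, which is disposed of cleanly by the nonnegativity of false-positive counts.
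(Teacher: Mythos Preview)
Your proposal is correct and follows essentially the same argument as the paper: both observe that $t \leq t'$ implies the index (or score) set $\{j : v_j < t\}$ is contained in $\{j : v_j < t'\}$, and then use that the maximum over a larger set cannot decrease. Your treatment is slightly more careful than the paper's in that you explicitly dispose of the empty-set case via nonnegativity of $\mathrm{FP}$, and your closing remark that nestedness is not needed for the inequality matches the paper's own observation.
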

Using this key property, we can find a \emph{maximal} threshold $t$ to use as a ``cutoff point'' for the sequence of nested candidate sets such that $\mathrm{FP_{max}}$ is controlled, as formalized next.
\begin{theorem}[FP-CP]
\label{thm:calibration}
Assume that examples $(X_i, Z_i) \in \mathcal{X} \times 2^\mathcal{Y}$, $i=1,\ldots, n+1$ are exchangeable. For each example $i$, let $S_{i,j}$, $j=1,\ldots, B$ (where $B \leq |\mathcal{Y}|$ is a finite hyper-parameter) be candidate sets, where finite random variable $V_{i,j} = \mathcal{F}(X_i, \mathcal{S}_{i,j})$ is a set nonconformity score. For tolerances $k \in \mathbb{R}_{> 0}$ and $\delta \in (0, 1)$ define the random variables $T_k$ and $T_{k, \delta}$  (based on the first $n$ examples) as
\vspace{-5pt}
\begin{align}
    &T_k := \label{eq:caltk} \\ &~~\sup \Big\{t \in \mathbb{R} \colon \frac{B + \sum_{i=1}^{n}\mathrm{FP_{max}}(X_i, Z_i, t)}{n+1} \leq k\Big\}\quad\text{and}\nonumber\\
    &T_{k,\delta} := \label{eq:caltkd}\\ &~~\sup \Big\{t \in \mathbb{R} \colon \frac{\sum_{i=1}^{n} \mathbf{1}\{\mathrm{FP_{max}}(X_i, Z_i, t) \leq k\}}{n+1} \geq 1 - \delta \Big\},\nonumber
\end{align}
where $\mathrm{FP_{max}}$ is as defined in Eq.~\eqref{eq:fpmax}. Then  we have that
\begin{align}
    &\mathbb{E}\Big[\mathrm{FP}_{\mathrm{max}}(X_{n+1}, Z_{n+1}, T_k)\Big] \leq k,\quad \text{and} \\
 &\mathbb{P}\Big(\mathrm{FP}_{\mathrm{max}}(X_{n+1}, Z_{n+1}, T_{k, \delta}) \leq k \Big) \geq 1 - \delta.
\end{align}
\end{theorem}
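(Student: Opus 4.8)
The plan is to phrase everything through the single real function $g_i(t) := \mathrm{FP_{max}}(X_i, Z_i, t)$. Because $\mathcal{F}$, and the individual‑label ranker used to form the nested $\mathcal{S}_{i,j}$, are fixed (trained on separate data), $g_i$ is a deterministic function of $(X_i,Z_i)$ alone; hence $(g_1,\dots,g_{n+1})$ is an exchangeable collection of random functions, and so is $(\tau_1,\dots,\tau_{n+1})$ for any $\tau_i$ obtained from $g_i$ by a fixed rule. Two structural facts will be used throughout: (i) by Lemma~\ref{lem:fpmax} each $g_i$ is non-decreasing in $t$, bounded by $B$, and in fact a left-continuous step function (it changes value only at the finitely many points $V_{i,j}$, and the inequality $v_j < t$ in Eq.~\eqref{eq:fpmax} is strict); (ii) consequently, for any left-continuous non-decreasing functions $a_1,\dots,a_m$ and constant $c$, whenever $\{t : \frac1{n+1}\sum_\ell a_\ell(t) \le c\}$ is nonempty and bounded above its supremum is attained, and similarly for the $\ge$ direction with non-increasing functions. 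The only degenerate cases are a supremum over the empty set (read as $-\infty$, where $g_{n+1}\equiv 0\le k$ makes everything trivial since $k>0$) and a supremum equal to $+\infty$ (where one passes to the limit $t\to\infty$); I will dispatch these in a sentence each.

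For $k$-FP validity I would compare $T_k$ to the symmetrized threshold $T_k^{\star} := \sup\{t : \frac1{n+1}\sum_{i=1}^{n+1} g_i(t) \le k\}$. Since $g_{n+1}(t)\le B$, the set in Eq.~\eqref{eq:caltk} defining $T_k$ is contained in the set defining $T_k^\star$, hence $T_k \le T_k^\star$, and by monotonicity $g_{n+1}(T_k)\le g_{n+1}(T_k^\star)$. Now $T_k^\star$ is a permutation-invariant functional of $(g_1,\dots,g_{n+1})$, so $g_1(T_k^\star),\dots,g_{n+1}(T_k^\star)$ are exchangeable, giving $\mathbb{E}[g_{n+1}(T_k^\star)] = \frac1{n+1}\mathbb{E}\big[\sum_{i=1}^{n+1} g_i(T_k^\star)\big] \le k$, where the inequality is exactly the constraint defining $T_k^\star$ evaluated at $t=T_k^\star$ (fact (ii)). Chaining gives $\mathbb{E}[\mathrm{FP_{max}}(X_{n+1},Z_{n+1},T_k)]\le k$.

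For $(k,\delta)$-FP validity I would reduce $T_{k,\delta}$ to an order statistic. Put $\tau_i := \sup\{t : g_i(t)\le k\}$; by monotonicity and left-continuity $g_i(t)\le k \iff t\le \tau_i$, and the $\tau_i$ are exchangeable. With $m := \lceil (1-\delta)(n+1)\rceil$, and using that the count in Eq.~\eqref{eq:caltkd} is integer-valued, one checks $T_{k,\delta} = \sup\{t : |\{i\le n : \tau_i \ge t\}| \ge m\}$, which equals the $m$-th largest among $\tau_1,\dots,\tau_n$ (and is $-\infty$ when $m>n$). Hence $g_{n+1}(T_{k,\delta})\le k \iff T_{k,\delta}\le \tau_{n+1} \iff R \le m-1$, where $R := |\{i\le n : \tau_i > \tau_{n+1}\}|$. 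Finishing is a standard rank argument: setting $R_\ell := |\{i \ne \ell : \tau_i > \tau_\ell\}|$ one has $\sum_{\ell=1}^{n+1}\mathbf{1}\{R_\ell \le m-1\} = |\{\ell : \tau_\ell \ge \tau_{(m)}\}| \ge m$, where $\tau_{(m)}$ is the $m$-th largest of all $n+1$ values; taking expectations and invoking exchangeability gives $(n+1)\,\mathbb{P}(R_{n+1}\le m-1)\ge m$, and since $m/(n+1)\ge 1-\delta$ we get $\mathbb{P}(\mathrm{FP_{max}}(X_{n+1},Z_{n+1},T_{k,\delta})\le k) = \mathbb{P}(R\le m-1)\ge 1-\delta$.

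I expect the main obstacle to be careful bookkeeping rather than any deep step: one must verify that the constraints defining $T_k$, $T_k^\star$ and $T_{k,\delta}$ genuinely hold \emph{at} the supremum (this is exactly where left-continuity of $\mathrm{FP_{max}}(x,z,\cdot)$ is needed, and why the strict inequality $v_j<t$ in Eq.~\eqref{eq:fpmax} is not cosmetic), and one must handle ties among the $\tau_i$ correctly — defining $R_\ell$ with a strict inequality is what keeps the counting identity $\sum_\ell \mathbf{1}\{R_\ell\le m-1\}\ge m$ valid when several $\tau_i$ coincide. The degenerate cases (empty or unbounded sublevel sets, $T_{k,\delta}=\pm\infty$) each need a line, but in every one of them the claimed bound is immediate.
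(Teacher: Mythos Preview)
Your argument is correct. For the $k$-FP part it is essentially the paper's proof: both symmetrize the threshold by replacing $B$ with the $(n{+}1)$st term, then use exchangeability to turn the constraint at $T_k^\star$ into a bound on $\mathbb{E}[g_{n+1}(T_k^\star)]$. The paper routes this through the marginal-RCPS machinery of \citet{arxiv-note} (their Theorem~A.1/Corollary~A.2) after the substitution $t\mapsto -t$ to get non-increasing right-continuous losses; you work directly with the non-decreasing left-continuous $\mathrm{FP_{max}}$ and write the symmetrization out by hand, but the content is the same.

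For the $(k,\delta)$-FP part your route genuinely differs. The paper again applies the RCPS-style symmetrization, this time to the indicator $t\mapsto \mathbf{1}\{\mathrm{FP^{-}_{max}}(X_i,Z_i,t)\le k\}$ via their Corollary~A.3 (compare the all-$(n{+}1)$ threshold $T'$ to $T_{k,\delta}$, then average conditional on the bag $\{g_1,\dots,g_{n+1}\}$). You instead collapse each $g_i$ to a single scalar $\tau_i=\sup\{t:g_i(t)\le k\}$, recognize $T_{k,\delta}$ as the $m$-th order statistic of $\tau_{1:n}$, and finish with the classical conformal rank/quantile argument. Both proofs rest on the same exchangeability principle; yours is closer to the standard conformal-quantile lemma and is fully self-contained, while the paper's has the advantage of reusing a single RCPS lemma for both parts (and, more broadly, for any monotone loss). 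Your bookkeeping on left-continuity (so that the constraints hold at the supremum) and on strict vs.\ non-strict inequalities in the rank count is exactly what is needed; the paper handles the same issue by passing to right-continuous $\mathrm{FP^{-}_{max}}$.
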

\begin{remark}
\label{rm:large}
The hyper-parameter $B$ plays an important role when controlling for $k$-FP. $T_k$ may be very conservative if $B = |\mathcal{Y}|$ and $|\mathcal{Y}|$ is very large, to the point where $T_k = -\infty$ always if $ |\mathcal{Y}| > k (n + 1)$. It can therefore be beneficial to truncate the considered label space $\mathcal{Y}$ for an example $x$ to only the top $B \ll k (n + 1)$ individual candidates, $\{y_1, \ldots, y_B\} \in \mathcal{Y}^B$. For example, for text generation tasks (like machine translation), $\mathcal{Y}$ is infinite, but we can restrict our predictions to a subset of the top $B$ beam search candidates (where $B$ can still be reasonably large). Still, this isn't free: a smaller $B$  may result in fewer true positives.
\end{remark}
\begin{remark}
No constraints are placed on the underlying set function $\mathcal{F}$ in Theorem~\ref{thm:calibration}; i.e., it need not be a $\mathrm{DeepSets}$ architecture. If, however, $\mathcal{F}$ is a good estimator of $\mathrm{FP}(Z, \mathcal{S}) ~|~ X$, then our method is more likely to identify sets that are approximately valid conditioned on $X_{n+1} = x_{n+1}$, which we investigate empirically in \S\ref{sec:results}. 
\end{remark}
\begin{remark}
Nestedness of $\mathcal{S}_{i,j}$ is not necessary for the above calibration to hold (it is mainly used for efficiency). The monotonicity of $\mathrm{FP_{max}}$ is sufficient.
\end{remark}

\subsection{Selecting the final output set}
\label{sec:greedy}
 The main consequence of Theorem~\ref{thm:calibration} is that, using the calibrated  nonconformity threshold $T_k$ or $T_{k, \delta} =  t^*$, we can construct a collection of sets that are simultaneously valid by keeping all candidate sets with scores less than $t^*$. Specifically, we are free to select \emph{any} set in the filtered set of candidates $S_{n+1, j}$, $j \in \mathcal{J}$ where $\mathcal{J}  := \{j \colon v_{n+1, j} <  t^*\}$, as a valid output.
Ideally, we would be able to follow the the oracle strategy in returning the smallest set with the highest number of true positives. This would make our predictions \emph{efficient}, in the sense that we are not including more false positives than necessary (even if the total is still $\leq k$). A reasonable choice is to then choose $\mathcal{S}_{j^*}$ where $j^* := \argmax_{j \in \mathcal{J}} |\mathcal{S}_j| - \mathcal{F}(x, \mathcal{S}_j)$; but this can be sub-optimal if $\mathcal{F}$ is not accurate.
As a greedy, but effective, approach we simply take the \emph{largest} set  as our final output, which has maximal TPR.   We formalize this in Proposition~\ref{prop:algo}.
\begin{proposition}[Greedy FP-CP]
\label{prop:algo}
Let $T_\circ$ denote either $T_k$ or $T_{k, \delta}$. Then random candidate sets $\mathcal{S}_{n+1, j}$,  $\forall j \in \mathcal{J}  := \{j \colon V_{n+1, j} <  T_\circ\}$, are valid. Furthermore, among indices $\mathcal{J}$, $\max \mathcal{J}$ indexes the nested set with the highest TPR.
\end{proposition}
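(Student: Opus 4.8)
The plan is to derive both halves of Proposition~\ref{prop:algo} directly from the guarantees for $\mathrm{FP_{max}}$ established in Theorem~\ref{thm:calibration}, using only the definition of $\mathrm{FP_{max}}$ in Eq.~\eqref{eq:fpmax} and the nested structure of the candidate sets; no new calibration argument is needed. For the validity claim, consider any selection rule that, on each joint realization of the calibration and test data, returns $\mathcal{S}_{n+1,\sigma}$ for some index $\sigma \in \mathcal{J}$ whenever $\mathcal{J}\neq\varnothing$ (for instance $\sigma=\max\mathcal{J}$, as in Algorithm~\ref{alg:psuedocode}), and returns $\varnothing$ otherwise. On the event $\{\sigma\in\mathcal{J}\}$ we have $V_{n+1,\sigma}<T_\circ$ by definition of $\mathcal{J}$, so $\mathcal{S}_{n+1,\sigma}$ lies in the collection $\{\mathcal{S}_{n+1,j}\colon V_{n+1,j}<T_\circ\}$ over which the maximum in Eq.~\eqref{eq:fpmax} is taken; hence pointwise
\[
\mathrm{FP}(Z_{n+1},\mathcal{S}_{n+1,\sigma}) \;\le\; \mathrm{FP_{max}}(X_{n+1},Z_{n+1},T_\circ),
\]
and on $\{\mathcal{J}=\varnothing\}$ the left side is $0$, so this inequality holds almost surely. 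Taking $T_\circ=T_k$ and expectations, Theorem~\ref{thm:calibration} gives $\mathbb{E}[\mathrm{FP}(Z_{n+1},\mathcal{S}_{n+1,\sigma})]\le\mathbb{E}[\mathrm{FP_{max}}(X_{n+1},Z_{n+1},T_k)]\le k$, i.e.\ $k$-FP validity; taking $T_\circ=T_{k,\delta}$, the same pointwise bound gives the event inclusion $\{\mathrm{FP_{max}}(X_{n+1},Z_{n+1},T_{k,\delta})\le k\}\subseteq\{\mathrm{FP}(Z_{n+1},\mathcal{S}_{n+1,\sigma})\le k\}$, hence $\mathbb{P}(\mathrm{FP}(Z_{n+1},\mathcal{S}_{n+1,\sigma})\le k)\ge 1-\delta$. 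Since the only property of $\sigma$ used is membership in $\mathcal{J}$, this shows every $\mathcal{S}_{n+1,j}$ with $j\in\mathcal{J}$ is a valid output.

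For the optimality of $\max\mathcal{J}$, I would invoke the nestedness of the candidate sets: $j\le j'$ implies $\mathcal{S}_{n+1,j}\subseteq\mathcal{S}_{n+1,j'}$, so $|\mathcal{S}_{n+1,j}\cap Z_{n+1}|\le|\mathcal{S}_{n+1,j'}\cap Z_{n+1}|$, and dividing by the common normalizer $\max(|Z_{n+1}|,1)$ from Eq.~\eqref{eq:TPR} yields $\mathrm{TPP}(Z_{n+1},\mathcal{S}_{n+1,j})\le\mathrm{TPP}(Z_{n+1},\mathcal{S}_{n+1,j'})$. Thus, on each realization, $j\mapsto\mathrm{TPP}(Z_{n+1},\mathcal{S}_{n+1,j})$ is nondecreasing, so $\max\mathcal{J}$ attains the largest TPP among $j\in\mathcal{J}$. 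Taking expectations, the selection rule $\sigma=\max\mathcal{J}$ has $\mathrm{TPR}=\mathbb{E}[\mathrm{TPP}]$ at least as large as that of any other selection rule valued in $\mathcal{J}$, which is the asserted maximality.

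I do not anticipate a real obstacle here; the argument is a short chaining of Eq.~\eqref{eq:fpmax} and Theorem~\ref{thm:calibration}. The only points requiring care are that $T_\circ$ and $\mathcal{J}$ are data-dependent random objects, so the comparison against $\mathrm{FP_{max}}$ must be made pointwise before integrating (or before passing to probabilities), and the degenerate case $\mathcal{J}=\varnothing$, which is handled by the convention that the predictor outputs $\varnothing$ — for which $\mathrm{FP}=0$ and $\mathrm{TPP}$ is trivially dominated, so all inequalities above remain valid.
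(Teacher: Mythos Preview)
Your proposal is correct and follows essentially the same approach as the paper: both arguments bound $\mathrm{FP}(Z_{n+1},\mathcal{S}_{n+1,j})$ pointwise by $\mathrm{FP_{max}}(X_{n+1},Z_{n+1},T_\circ)$ via the definition in Eq.~\eqref{eq:fpmax} and then invoke Theorem~\ref{thm:calibration}, and both derive TPR maximality from the monotonicity of $\mathrm{TPP}$ under the nested inclusion $\mathcal{S}_{n+1,j}\subseteq\mathcal{S}_{n+1,j'}$ for $j\le j'$. If anything, your treatment is slightly more careful than the paper's in making explicit that $\mathcal{J}$ and $T_\circ$ are random (so the comparison must be pointwise before integrating) and in handling the degenerate case $\mathcal{J}=\varnothing$.
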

We discuss some additional considerations of our method, as well as potential limitations and extensions, in Appendix~\ref{app:practical}.
\section{Experimental setup}\label{sec:setup}

In this section, we outline our tasks and models.
We also describe our evaluation and baselines. 
For all experiments, we set  $B$ to $100$. Appendix~\ref{app:tasks} contains additional  details.
\subsection{Tasks} \label{sec:tasks}

\newpar{In-silico screening for drug discovery}
As introduced in \S\ref{sec:introduction},  the goal of in-silico screening is to identify potentially effective drugs to manufacture and test. We use the ChEMBL database~\cite{mayr} to screen molecules for combinatorial constraint satisfaction, where given a constraint such as ``\emph{has property A but not property B},'' we want to identify the subset of molecules from a given set of candidates that have the desired attributes. We partition the dataset both by molecules and property combinations, so that at test time the model makes predictions on combinations it has never been tested on before (after being trained on the same properties, but seen only in different combinations), over a pool of molecules that it has never seen before. Scores for candidate molecules are obtained via an ensemble of directed MPNNs~\cite{chemprop}.

\newpar{Object detection} We consider the task of placing bounding boxes around all objects of a certain type (such as a person) that are present in an image (of which there may be few, many, or none). We use the MS-COCO dataset~\cite{Lin2014MicrosoftCC}, a dataset with images of everyday scenes containing 80 object types (e.g., person, bicycle, dog, car, etc). We extract typed bounding box candidates (i.e., tuples of both location \emph{and} category) using an EfficientDet model~\cite{Tan2020EfficientDetSA} with non-maximum suppression. True positives are defined as  boxes that have an intersection over union (IoU) $>0.5$ with a matching annotation of the same type. 

\begin{figure*}[!t]
\small
\centering
\begin{subfigure}[b]{0.30\textwidth}
\includegraphics[width=1.05\linewidth]{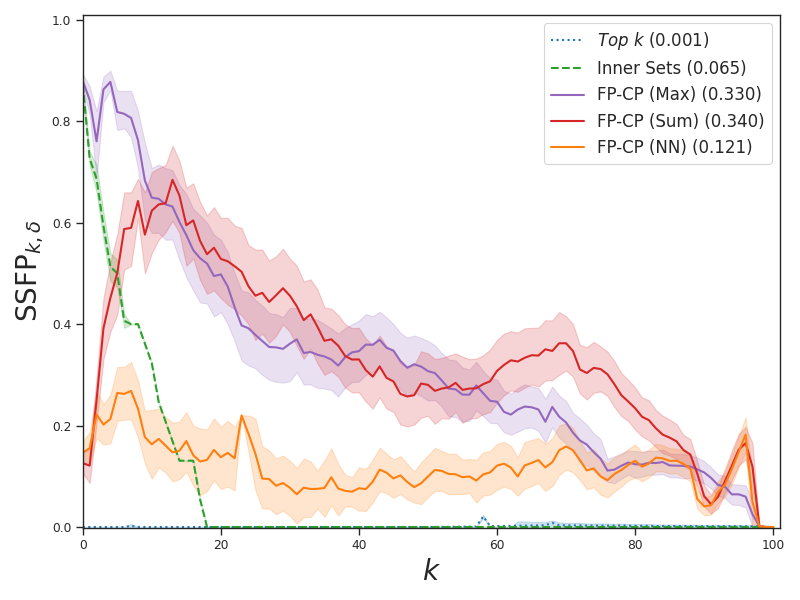} 
\vspace*{-1.1\baselineskip}
\end{subfigure}
~
\begin{subfigure}[b]{0.30\textwidth}
\includegraphics[width=1.05\linewidth]{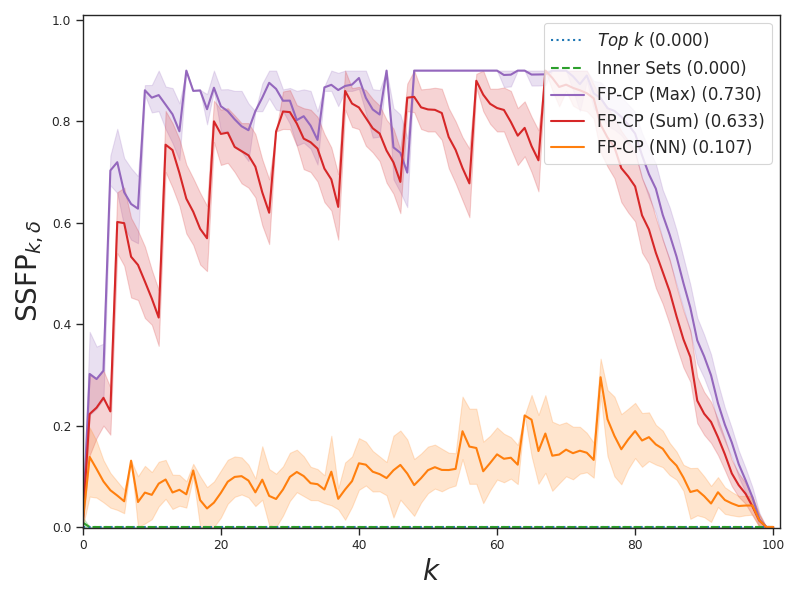}
\vspace*{-1.1\baselineskip}
\end{subfigure}
~
\begin{subfigure}[b]{0.30\textwidth}
\includegraphics[width=1.05\linewidth]{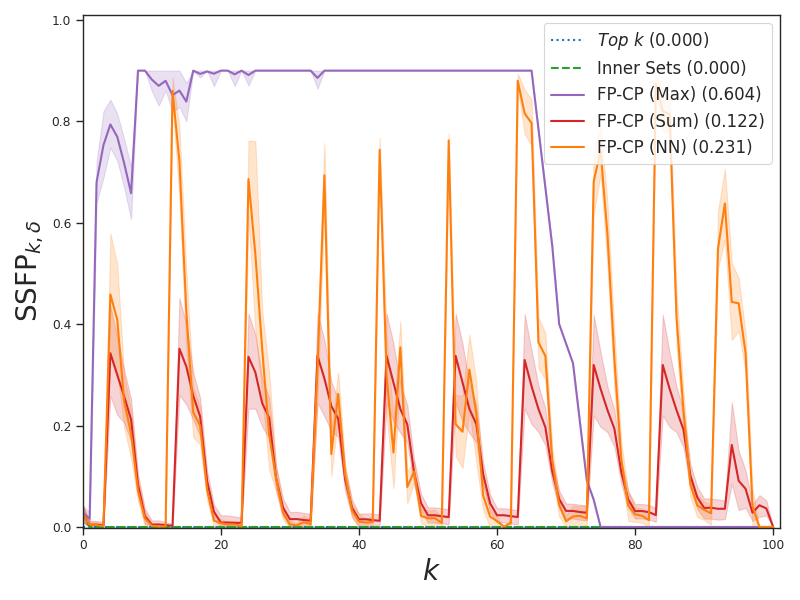} 
\vspace*{-1.1\baselineskip}
\end{subfigure}
\vspace*{-.85\baselineskip}
\begin{subfigure}[b]{0.30\textwidth}
\includegraphics[width=1.05\linewidth]{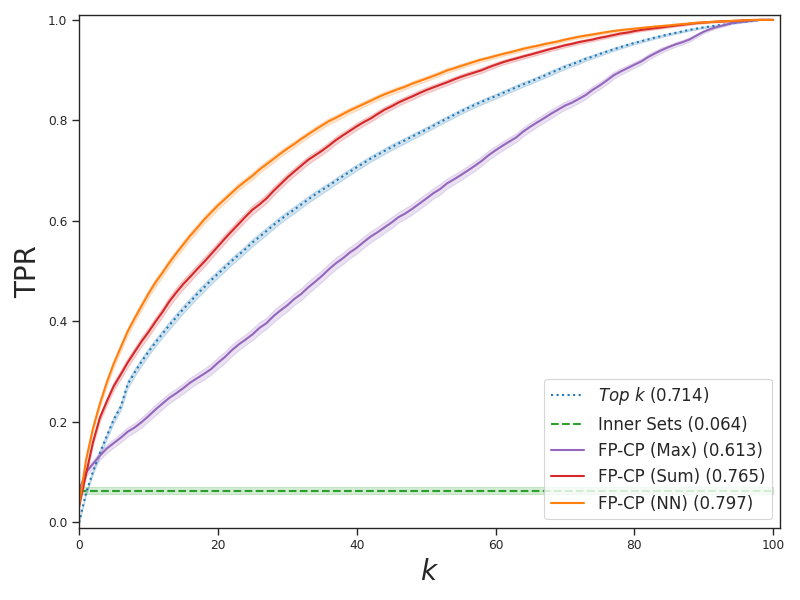} 
\vspace*{-1.8\baselineskip}
\caption{In-silico screening}
\end{subfigure}
~
\begin{subfigure}[b]{0.30\textwidth}
\includegraphics[width=1.05\linewidth]{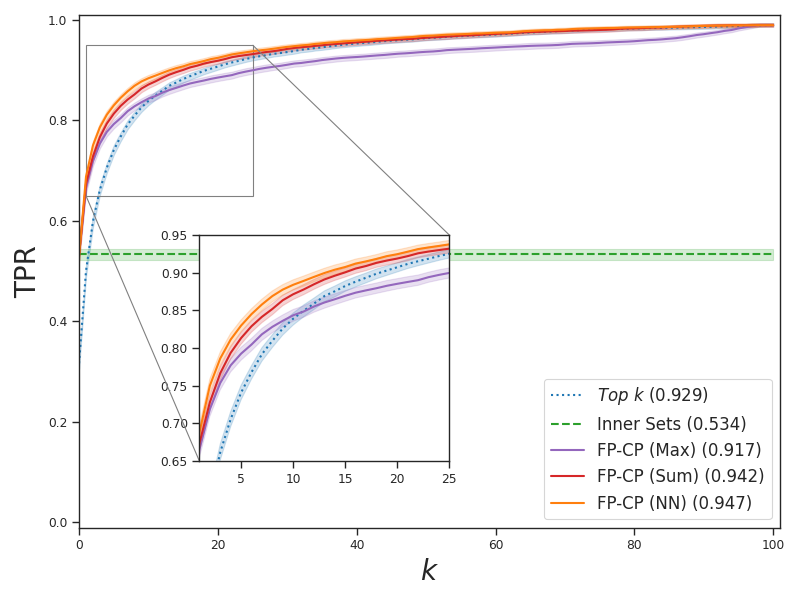}
\vspace*{-1.8\baselineskip}
\caption{Object detection}
\end{subfigure}
~
\begin{subfigure}[b]{0.30\textwidth}
\includegraphics[width=1.05\linewidth]{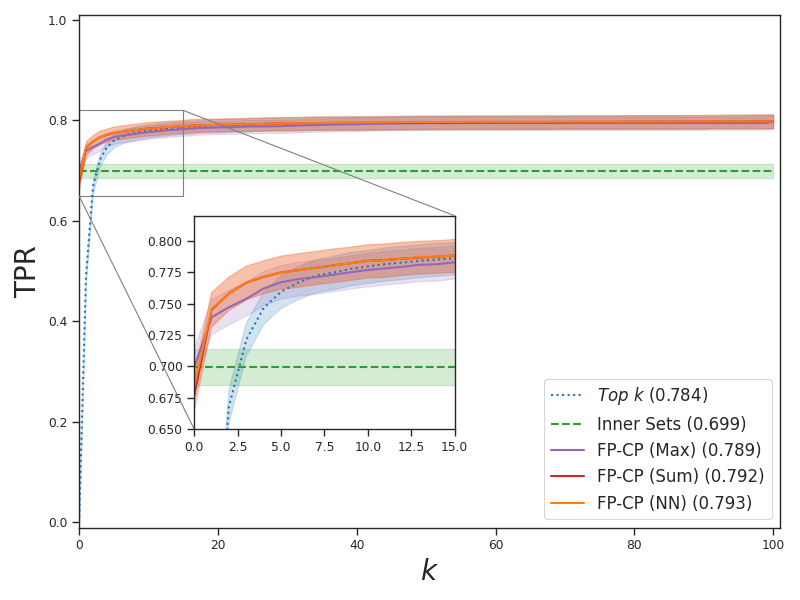} 
\vspace*{-1.8\baselineskip}
\caption{Entity extraction}
\end{subfigure}
\caption{$(k, \delta)$-FP results as a function of $k$ for $\delta = 0.1$ up to $k = B = 100$. The top row plots $\mathrm{SSFP}_{k, \delta}$  violation (lower is better). The bottom row plots TPR (higher is better). We see that compared to the other baselines, our conformal $\mathrm{DeepSets}$ approach (NN) has the best (or close to) TPR AUC across tasks, while having the lowest (or close to) $\mathrm{SSFP}_{k, \delta}$ violation.}
\vspace{-15pt}
\label{fig:kdelta}
\end{figure*}

\newpar{Entity extraction} In entity extraction, we are interested in identifying all named entities that appear in a tokenized sentence $x$ of length $l$, where $x = \{w_1, \ldots, w_l\}$, and classifying them into appropriate categories. A named entity is a proper noun, demarcated by a contiguous span $\{w_{\mathrm{start}}, \ldots, w_\mathrm{end}\} \subseteq x$ of the input sentence, that can be associated with a particular class of interest (such as a person, location, organization, or product). We report results on the CoNLL NER dataset~\cite{tjong-kim-sang-de-meulder-2003-introduction}, where we use the PURE span-based entity extraction model of \citet{zhong-chen-2021-frustratingly} to individually score all $\mathcal{O}(l^2)$ candidate spans. We consider exact span predictions of the correct category to be true positives, and all others to be false positives. Many sentences contain no entities.

\subsection{Evaluation}
For each task we learn all models on a training set, perform model selection on a validation set, and report final
results as the average over 1000 random trials on a test set, where in each trial we partition the data into 80\% calibration ($x_{1:n}$) and 20\% prediction points ($x_{n+1}$).  
To compare across $k$, we plot each metric as a function of $k$ (up to $k = B$), and compute the area under the curve (AUC). Shaded  regions show the $16$-$84$th percentiles across trials. 
In addition to TPR (our main metric), as our method already guarantees marginal FP-validity, we also compute the size-stratified $k$-FP ($\mathrm{SSFP}_k$) and $(k, \delta)$-FP ($SSFP_{k, \delta}$) violation~\citep{angelopoulos2021sets}, see Appendix~\ref{app:ssfp}.
Lower size-stratified violation suggests that a classifier has better conditional coverage. We also report average FP results in Appendix~\ref{app:additionalexperiments}.

\begin{figure*}[!t]
\small
\centering
\begin{subfigure}[b]{0.30\textwidth}
\includegraphics[width=1.05\linewidth]{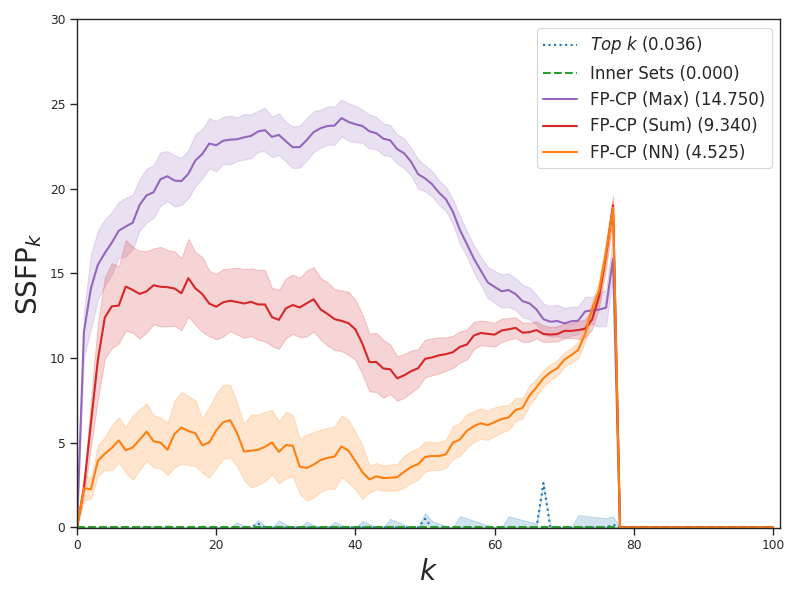} 
\vspace*{-1.1\baselineskip}
\end{subfigure}
~
\begin{subfigure}[b]{0.30\textwidth}
\includegraphics[width=1.05\linewidth]{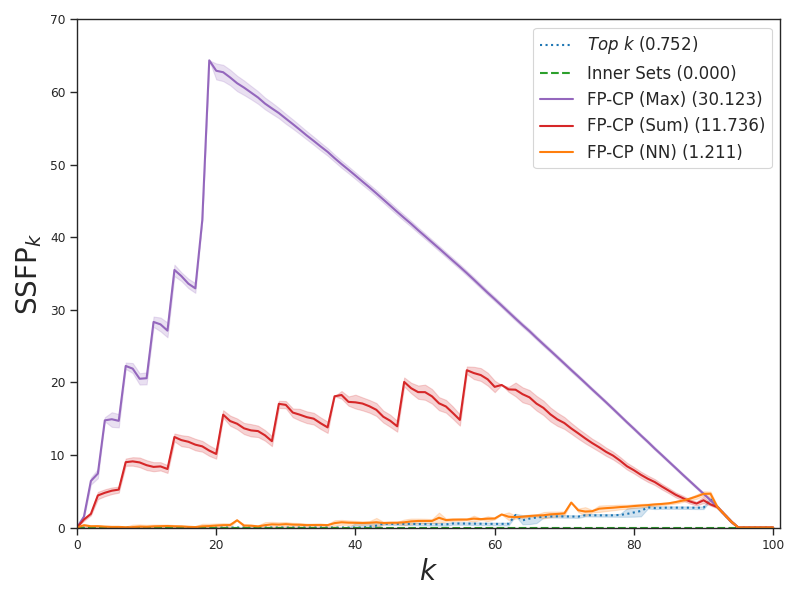}
\vspace*{-1.1\baselineskip}
\end{subfigure}
~
\begin{subfigure}[b]{0.30\textwidth}
\includegraphics[width=1.05\linewidth]{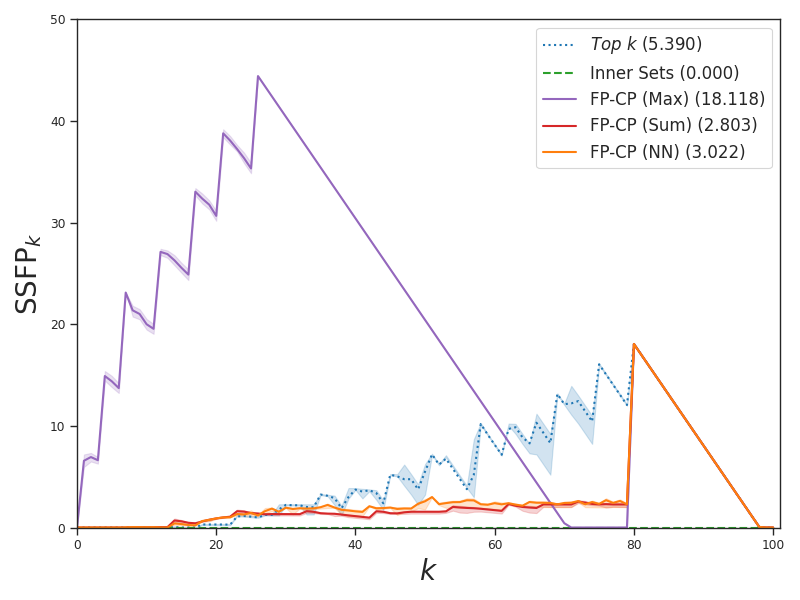} 
\vspace*{-1.1\baselineskip}
\end{subfigure}
\vspace*{-.85\baselineskip}
\begin{subfigure}[b]{0.30\textwidth}
\includegraphics[width=1.05\linewidth]{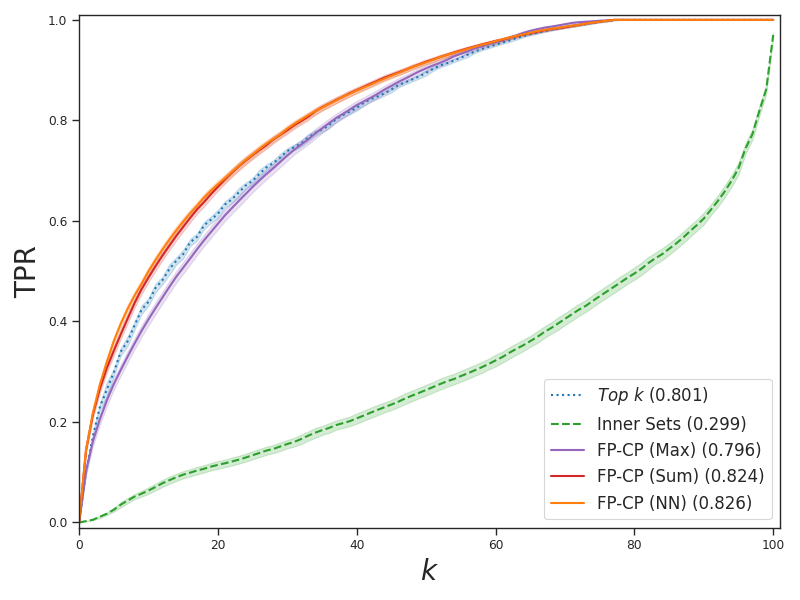} 
\vspace*{-1.8\baselineskip}
\caption{In-silico screening}
\end{subfigure}
~
\begin{subfigure}[b]{0.30\textwidth}
\includegraphics[width=1.05\linewidth]{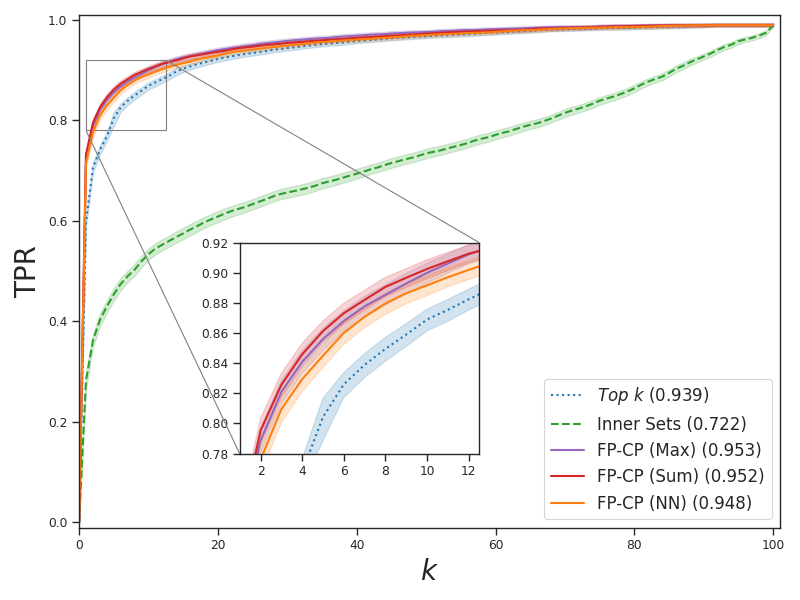}
\vspace*{-1.8\baselineskip}
\caption{Object detection}
\end{subfigure}
~
\begin{subfigure}[b]{0.30\textwidth}
\includegraphics[width=1.05\linewidth]{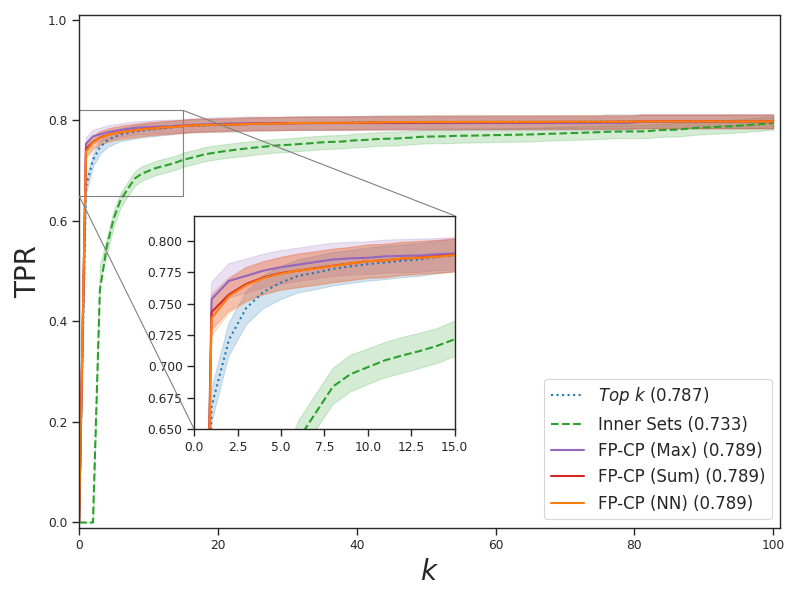} 
\vspace*{-1.8\baselineskip}
\caption{Entity extraction}
\end{subfigure}
\caption{$k$-FP results as a function of $k$ up to $k = B = 100$. The top row plots size-stratified $k$-FP violation (lower is better). The bottom row plots the TPR (higher is better). As $k$ grows, our methods quickly achieve high TPR. Consistent with Figure~\ref{fig:kdelta},  our conformal $\mathrm{DeepSets}$ approach (NN) demonstrates high TPR and low $\mathrm{SSFP}_{k}$ across tasks.}
\vspace{-15pt}
\label{fig:k}
\end{figure*}

\subsection{Baselines}
\label{sec:baselines}
For all experiments, we compare our FP-CP (NN) method using a $\mathrm{DeepSets}$-based $\mathcal{F}$ to the following baselines:
\vspace{-5pt}
\begin{enumerate}[leftmargin=*, noitemsep]
    \item \textbf{Top-k.} We naively take the top $k'$ fixed predictions for any $x_{n+1}$, where $k'$ is found using average performance on the calibration set (without any correction factors, so it is \emph{not} guaranteed to be valid). Note that $k'$ can be (and mostly is) different than the user-specified $k$ for FP.\vspace{3pt}
    \item \textbf{Outer Sets @ 90.} \textcolor{black}{We use the (one-sided) multi-label conformal prediction technique of \citet{cauchois2020knowing} to bound $\mathbb{P}( Z_{n+1} \subseteq \mathcal{C}_\epsilon(X_{n+1}) ) \geq 0.90$. Though not directly comparable, we use this to benchmark our method against   sets that preserve marginal coverage (at a typical level). For simplicity, we use the direct inner/outer method without dynamic CQC quantiles.}\footnote{Preliminary experiments indicated that including CQC quantiles did not lead to significantly different (marginal) results.}
    \vspace{3pt}
    \item \textbf{Inner Sets.} Again, we use the (one-sided) method of \citet{cauchois2020knowing}, this time to bound $\mathbb{P}(\mathcal{C}_\epsilon(X_{n+1}) \subseteq Z_{n+1}) \geq 1 -\epsilon$ at level $\epsilon = k / 
    B$ (recall that $B \leq |\mathcal{Y}|$ is the truncation parameter, and the FP upper bound) for $k$-FP control and at level $\epsilon = \delta$ for $(k, \delta)$-FP control. It is straightforward to show that these levels of $\epsilon$ conservatively achieve $k$-FP and $(k, \delta)$-FP control. 
    \vspace{3pt}
    \item \textbf{Independent scoring (max).} We take $\mathcal{F}(x, \mathcal{S})$ to be the maximum individual label uncertainty in $\mathcal{S}$, $\max\{1 - p_\theta(y_c \in Z \mid x) \colon y_c \in \mathcal{S}\}$. This is equivalent to choosing labels independently. Calibration uses the same FP-CP algorithm  (it is a drop-in replacement for the NN). 
    \vspace{3pt}
    \item \textbf{Cumulative scoring (sum).} We take $\mathcal{F}(x, \mathcal{S})$ to be the cumulative individual label uncertainty in $\mathcal{S}$, $\sum_{y_c \in \mathcal{S}} 1 - p_\theta(y_c \in Z \mid x)$. We calibrate $p_{\theta}(y_c \in Z \mid x)$ using Platt scaling~\cite{Platt99probabilisticoutputs}. As with the max scoring baseline, calibration uses the same FP-CP algorithm.
    \vspace{-5pt}
\end{enumerate}
Baseline (1) contrasts our approach with what is normally a ``first thought''  in practice, (2) and (3) test the efficacy of our system over existing techniques, and (4) and (5) demonstrate our FP-CP calibration with simpler alternatives for $\mathcal{F}$.

\section{Experimental Results}\label{sec:results}
In the following, we present our main findings.  Additional experimental results are included in Appendix~\ref{app:additionalexperiments}. 

 \newpar{Limiting false positives} 
 The top rows of Figures~\ref{fig:kdelta} and \ref{fig:k} show the size-stratified violation  for $(k,\delta)$-FP and $k$-FP, respectively. Across values of $k$, FP-CP (NN) typically achieves substantially \emph{lower} worst-case violations than either  max or sum scoring alternatives, (though, in some cases, the magnitude of $\mathrm{SSFP}$ can depend strongly on $k$). The Top-k and Inner Sets approaches also prevent large violations (though, by itself, this result is not necessarily impressive, as always returning an empty set will lead to $\mathrm{SSFP} = 0$). When accounting for TPR (bottom rows), we see that our FP-CP methods demonstrate stronger performance.
 
 \newpar{Maximizing true positive rates} 
The bottom rows of Figures~\ref{fig:kdelta} and \ref{fig:k} plot TPR rates and AUC across values of $k$, while Table~\ref{tab:chemres} details results for several representative individual configurations. On the screening task, we see that our FP-CP (NN) method provides significantly higher TPR than other baselines. For example, allowing no more than 5 false positives leads to a TPR of $36.1\%$ with $k$-FP. In comparison, the TPR of Top-k is only $29.8\%$. As might be expected, the advantage of the $\mathrm{DeepSets}$ approach underlying FP-CP (NN) over simpler FP-CP scoring mechansims is more pronounced for tasks with higher cardinality label sets, such as in-silico screening versus object detection of entity extraction (see a comparison of dataset characteristics in Table~\ref{tab:stats}). Furthermore, since entity extraction contains a high proportion of examples with ``empty'' label sets, we can see that its TPR asymptotes at the natural rate of answerable examples. Nevertheless, in general, all FP-CP methods (with max, sum, or NN scoring) provide high TPR (exceeding non FP-CP methods) even at low values of $k$.

\newpar{Comparison to conformal coverage methods}\textcolor{black}{ Table~\ref{tab:conformalcomparison} gives the results of the coverage-seeking Outer Sets method at level $0.90$ (a typical tolerance). Indeed, we achieve strong TPR ($97.2\%$ for the in-silico screening task), but also incur a high false positive cost in the process ($63.6$ average FP for in-silico screening). In contrast, our method allows us to directly limit false positives, without losing high TPR empirically (e.g., equivalently controlling for $\leq 63.6$ FP, we acheive $97.0\%$ TPR on the in-silico screening task).}
 

\section{Conclusion}
Conformal prediction, in its standard formulation, already grants theoretical performance guarantees that can be critical in many applications. Naively applying CP, however, can yield disappointing results. Even if the target coverage is upheld, the predicted sets may be too large, and too noisy, to be practical. In this paper, we proposed a method for trading coverage guarantees in favor of strict limits on the number of false positives contained in our prediction sets. Our results show that our method yields classifiers that (1) still achieve strong true positive rates compared to their coverage-seeking counterparts, and (2) predict meaningful output sets with effectively controlled numbers of false positives.

\section*{Acknowledgements}
We thank Ben Fisch, Anastasios Angelopoulous, Stephen Bates, and Lihua Lei for valuable technical feedback and discussions.
AF is supported in part by the NSF Graduate Research Fellowship. This work is also supported in part by MLPDS and the DARPA AMD project.



\bibliographystyle{plainnat}
\bibliography{main}

\clearpage
\appendix
\onecolumn
\counterwithin{figure}{section}
\counterwithin{table}{section}
\counterwithin{equation}{section}
\section{Mathematical details}
\label{app:proofs}

\subsection{Proof of Theorem~\ref{thm:conformalprediction}}
\begin{proof}
This is a well-known result~\cite{vovk2005algorithmic, Papadopoulos08, lei2018distribution, romano2019quantile}; we prove it here for completeness.
Since the nonconformity scores $V_i$ are constructed symmetrically, then
\begin{align*}
    &((X_1, Y_1), \ldots, (X_{n+1}, Y_{n+1})) \overset{d}{=} ((X_{\sigma(1)}, Y_{\sigma(1)}), \ldots, (X_{\sigma(n+1)}, Y_{\sigma(n+1)})) \\ &\Longleftrightarrow (V_1, \ldots, V_{n+1}) \overset{d}{=} (V_{\sigma(1)}, \ldots, V_{\sigma(n+1)})
\end{align*}
for all permutations $(\sigma(1), \ldots \sigma(n+1))$. Therefore, if $\{(X_i, Y_i)\}_{i=1}^{n+1}$ are exchangeable, then so too are their nonconformal scores $\{V_i = \mathcal{M}(X_i, Y_i)\}_{i=1}^{n+1}$ given exchangeability-preserving nonconformity measure $\mathcal{M}$.

 By the construction of $\mathcal{C}$, we have

\begin{equation*}
    Y_{n+1} \in \cset(X_{n+1}) \Longleftrightarrow V_{n+1} \leq \mathrm{Quantile}(1 - \epsilon, V_{1:n} \cup \{\infty\}).
\end{equation*}

This implies that $V_{n+1}$ is ranked among the $\lceil(1 - \epsilon)\cdot (n + 1)\rceil$ smallest of $V_1, \ldots, V_n, \infty$. Since $V_i$ are exchangeable, this happens with probability at least $1 - \epsilon$.
\end{proof}

\subsection{Proof of Lemma~\ref{lem:fpmax}}
\begin{proof}

We will not rely on nestedness of $\mathcal{S}_j$.

Notice that
\begin{equation}
    t \leq t' \Longrightarrow \{v_j \colon v_j < t\} \subseteq \{v_j \colon v_j < t'\}.
\end{equation}

As an immediate consequence,
\begin{align}
    t \leq t' &\Longrightarrow \{\mathrm{FP}(z, \mathcal{S}_j) \colon v_j \leq t\} \subseteq \{\mathrm{FP}(z, \mathcal{S}_j) \colon v_j \leq t\} \\
    &\Longrightarrow \max\{\mathrm{FP}(z, \mathcal{S}_j) \colon v_j < t\} \leq \max\{\mathrm{FP}(z, \mathcal{S}_j) \colon v_j < t'\} \\
    &\Longrightarrow \mathrm{FP_{max}}(x, z, t) \leq \mathrm{FP_{max}}(x, z, t').
\end{align}
\end{proof}




\subsection{Proof of Theorem~\ref{thm:calibration}}

Our proof of Theorem~\ref{thm:calibration} builds on marginal RCPS \cite{arxiv-note}.  We restate their results here:
\begin{theorem}[Marginal RCPS]
\label{thm:marginallower}
Let $L_i \colon \mathbb{R} \rightarrow \mathbb{R}$, $i = 1, \ldots, n+1$ be exchangeable functions, where $L_i(t)$ is non-increasing in $t$. Also, take $g \colon \mathbb{R} \rightarrow \mathbb{R}$ where $g(x)$ is non-decreasing in $x$. Further assume that $g \circ L_i$ is right-continuous, and

\begin{equation}
    \inf_t g(L_i(t)) < \gamma, \quad\quad \sup_t g(L_i(t)) \leq B < \infty \text{ almost surely}.
\end{equation}

For any $\gamma \geq 0$, define the random variable $T(\gamma, g)$ as

\begin{equation}
    T(\gamma; g) := \inf \left\{t \colon \frac{1}{n+1} \sum_{i=1}^n g(L_i(t)) \leq \gamma \right\}.
\end{equation}

Then $\mathbb{E}[g \circ L_{n+1}(T(\gamma; g))] \leq \gamma + \frac{B}{n+1}$.
\end{theorem}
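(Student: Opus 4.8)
The plan is to compare the empirical threshold $T(\gamma; g)$, which only sees the first $n$ functions, against an \emph{oracle} threshold built symmetrically from all $n+1$ of them, and then to invoke exchangeability. Write $R_i := g \circ L_i$; by hypothesis each $R_i$ is non-increasing, right-continuous, satisfies $R_i(t) \le B$ for all $t$, and has $\inf_t R_i(t) < \gamma$. Set $\alpha := \gamma + \tfrac{B}{n+1}$, let $\bar R(t) := \tfrac{1}{n+1}\sum_{i=1}^{n+1} R_i(t)$ be the full average, and define the oracle threshold $T^* := \inf\{t \in \mathbb{R} \colon \bar R(t) \le \alpha\}$. (For monotone functions on $\mathbb{R}$ I read values at $\pm\infty$ as the one-sided limits, which always exist; thus $R_i(-\infty) = \sup_t R_i(t) \le B$.) First I would verify $T^*$ is well-defined with $\bar R(T^*) \le \alpha$ almost surely: since each $R_i$ is non-increasing, $\inf_t R_i(t) < \gamma$ gives some $t_i$ with $R_i(t) < \gamma$ for all $t \ge t_i$, so $\bar R(t) < \gamma \le \alpha$ for $t \ge \max_i t_i$ and the defining set is nonempty; being non-increasing and right-continuous, $\bar R$ makes that set a closed up-set $[T^*, \infty)$, whence $\bar R(T^*) \le \alpha$ (via right-continuity when $T^*$ is finite, and via $\bar R(-\infty) = \sup_t \bar R(t) \le \alpha$ otherwise).

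The crucial step is the comparison $T^* \le T(\gamma; g)$. For every $t$,
\[
  \frac{1}{n+1}\sum_{i=1}^{n} R_i(t) \;=\; \bar R(t) - \frac{R_{n+1}(t)}{n+1} \;\ge\; \bar R(t) - \frac{B}{n+1},
\]
using only $R_{n+1}(t) \le B$. Hence $\tfrac{1}{n+1}\sum_{i=1}^{n} R_i(t) \le \gamma$ forces $\bar R(t) \le \gamma + \tfrac{B}{n+1} = \alpha$, so $\{t : \tfrac{1}{n+1}\sum_{i=1}^n R_i(t) \le \gamma\} \subseteq \{t : \bar R(t) \le \alpha\}$, and taking infima gives $T^* \le T(\gamma; g)$. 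Because $R_{n+1}$ is non-increasing, $R_{n+1}(T(\gamma; g)) \le R_{n+1}(T^*)$ pointwise, so $\mathbb{E}[R_{n+1}(T(\gamma; g))] \le \mathbb{E}[R_{n+1}(T^*)]$.

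Next I would symmetrize. The map $(R_1, \dots, R_{n+1}) \mapsto T^*$ is permutation-invariant, and $(R_1, \dots, R_{n+1})$ is exchangeable since $(L_1, \dots, L_{n+1})$ is and $R_i = g \circ L_i$ is a fixed transformation; consequently $R_j(T^*) \overset{d}{=} R_{n+1}(T^*)$ for every $j$. Therefore
\[
  \mathbb{E}\bigl[R_{n+1}(T^*)\bigr] = \frac{1}{n+1}\sum_{j=1}^{n+1}\mathbb{E}\bigl[R_j(T^*)\bigr] = \mathbb{E}\bigl[\bar R(T^*)\bigr] \le \alpha,
\]
the last inequality by the pointwise bound from the first paragraph. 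Chaining with the previous display yields $\mathbb{E}[g\circ L_{n+1}(T(\gamma; g))] \le \alpha = \gamma + \tfrac{B}{n+1}$, which is the claim.

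The single load-bearing idea is to run the oracle comparison at the \emph{inflated} level $\alpha = \gamma + \tfrac{B}{n+1}$ rather than at $\gamma$: dropping the $(n{+}1)$-st term can depress the $n$-point average below the full average by as much as $\tfrac{B}{n+1}$, and that inflated level is exactly the slack needed for the set inclusion, hence for $T^* \le T(\gamma; g)$. The remaining ingredients — well-definedness (where the $\inf_t R_i(t) < \gamma$ hypothesis and right-continuity enter), the monotonicity comparison, and the exchangeability/symmetrization — are standard conformal/RCPS machinery; the only bookkeeping I would be careful with is the degenerate cases $T^* \in \{\pm\infty\}$, handled uniformly by interpreting $R_i(\pm\infty)$ as the monotone limits.
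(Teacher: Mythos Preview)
Your proof is correct and follows the standard oracle-comparison argument. The paper itself does not give a self-contained proof of this theorem---it simply cites the companion note \citep{arxiv-note}---so there is no detailed argument in the paper to compare against directly. That said, your approach is exactly the technique the paper \emph{does} spell out when proving the companion result, Corollary~\ref{cor:lower} (the non-decreasing lower-bound case): define a symmetric oracle threshold using all $n{+}1$ functions, establish the inequality $T^* \le T(\gamma;g)$ via set inclusion, push through monotonicity, and finish by exchangeability. The one cosmetic difference is that the paper's proof of Corollary~\ref{cor:lower} carries out the symmetrization by conditioning on the unordered bag $E_f = \{R_1,\dots,R_{n+1}\}$, whereas you use the marginal identity $R_j(T^*)\overset{d}{=}R_{n+1}(T^*)$ directly; these are equivalent, and your handling of the inflated level $\alpha = \gamma + B/(n{+}1)$ is precisely the adjustment needed in the upper-bound direction.
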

\begin{proof}
See \citet{arxiv-note}.
\end{proof}

We also restate Corollary 1 of \citet{arxiv-note}.

\begin{corollary}[Marginal RCPS, adjusted]
\label{cor:upper}
Under the same setting as in Theorem~\ref{thm:marginallower},
\begin{equation}
    \mathbb{E}[g \circ L_{n+1}(\widetilde{T}(\gamma; g))] \leq \gamma,
\end{equation}
where
\begin{equation}
    \widetilde{T}(\gamma; g) = \inf \left \{t \colon \frac{1}{n+1} \left( B + \sum_{i=1}^n g(L_i(t))\right)  \leq \gamma\right\}.
\end{equation}
\end{corollary}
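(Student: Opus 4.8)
The plan is to obtain Corollary~\ref{cor:upper} by re-running the argument behind Theorem~\ref{thm:marginallower} with the calibration budget shifted by $B/(n+1)$. The quick-and-dirty route is purely algebraic: since $\widetilde T(\gamma;g)=\inf\{t\colon \frac{1}{n+1}\sum_{i=1}^n g(L_i(t))\le \gamma-\frac{B}{n+1}\}=T(\gamma-\frac{B}{n+1};g)$, Theorem~\ref{thm:marginallower} applied with budget $\gamma-\frac{B}{n+1}$ would give $\mathbb{E}[g\circ L_{n+1}(\widetilde T(\gamma;g))]\le (\gamma-\frac{B}{n+1})+\frac{B}{n+1}=\gamma$. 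The one soft spot in invoking the theorem verbatim is its hypothesis $\inf_t g(L_i(t))<\gamma$, which need not survive replacing $\gamma$ by the smaller $\gamma-\frac{B}{n+1}$; to avoid depending on that, I would instead reprove the relevant step directly, which turns out to go through cleanly regardless of the sign of $\gamma-\frac{B}{n+1}$.

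Write $R_i(t):=g(L_i(t))$; by hypothesis each $R_i$ is non-increasing, right-continuous, $\sup_t R_i(t)\le B$ and $\inf_t R_i(t)<\gamma$ almost surely, and $(R_1,\dots,R_{n+1})$ is exchangeable. Introduce the (non-computable) ``oracle'' threshold $T^\bullet:=\inf\{t\colon \frac{1}{n+1}\sum_{i=1}^{n+1}R_i(t)\le\gamma\}$, which also uses the test point. First, because $R_{n+1}(t)\le B$ for every $t$, any $t$ with $\frac{1}{n+1}\sum_{i=1}^n R_i(t)\le\gamma-\frac{B}{n+1}$ satisfies $\frac{1}{n+1}\sum_{i=1}^{n+1}R_i(t)\le\gamma$; hence the defining set of $\widetilde T(\gamma;g)$ is contained in that of $T^\bullet$, so $\widetilde T(\gamma;g)\ge T^\bullet$. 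Second, $\frac{1}{n+1}\sum_{i=1}^{n+1}R_i(\cdot)$ is non-increasing, right-continuous, with $t\to\infty$ limit $\frac{1}{n+1}\sum_i\inf_t R_i(t)<\gamma$, so its $\gamma$-sublevel set is a nonempty right-closed interval $[T^\bullet,\infty)$, giving $\frac{1}{n+1}\sum_{i=1}^{n+1}R_i(T^\bullet)\le\gamma$ and in particular $T^\bullet<\infty$ a.s.

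The crux is then exchangeability of the stopped process: $T^\bullet$ is a permutation-symmetric functional of $(R_1,\dots,R_{n+1})$, so $(R_1(T^\bullet),\dots,R_{n+1}(T^\bullet))$ is exchangeable; thus $\mathbb{E}[R_j(T^\bullet)]$ is independent of $j$, whence $\mathbb{E}[R_{n+1}(T^\bullet)]=\frac{1}{n+1}\sum_{j=1}^{n+1}\mathbb{E}[R_j(T^\bullet)]=\mathbb{E}\big[\frac{1}{n+1}\sum_{j=1}^{n+1}R_j(T^\bullet)\big]\le\gamma$. Finally, since $R_{n+1}$ is non-increasing and $\widetilde T(\gamma;g)\ge T^\bullet$, we have $R_{n+1}(\widetilde T(\gamma;g))\le R_{n+1}(T^\bullet)$ pointwise (this also absorbs the degenerate case $\widetilde T=+\infty$), so taking expectations yields $\mathbb{E}[g\circ L_{n+1}(\widetilde T(\gamma;g))]\le\gamma$. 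I expect the main obstacle to be the exchangeability-of-the-stopped-process step: one must verify that $T^\bullet$ is a genuinely symmetric and measurable function of the $R_i$'s, and that the defining infimum is attained at a finite value (handled by right-continuity together with the $\inf_t R_i(t)<\gamma$ assumption, which keeps the sublevel set nonempty a.s.); the remaining steps are routine monotonicity bookkeeping.
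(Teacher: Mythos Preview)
Your proposal is correct. The paper itself does not give a proof of this corollary (it defers to \citet{arxiv-note}), but your argument mirrors exactly the pattern the paper uses in its own proof of the companion result, Corollary~\ref{cor:lower}: introduce the oracle threshold $T^\bullet$ using all $n+1$ functions, compare it to the computable threshold via the $B$-bound, invoke right-continuity to attain the infimum, and use exchangeability (symmetry of $T^\bullet$ in the $R_i$'s) to average out the $(n{+}1)$st term; this is the standard route and there is nothing to add.
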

\begin{proof}
See \citet{arxiv-note}.
\end{proof}

Following their analysis, we  provide an additional corollary for {lower}-bounding function $R_{n+1}$, where $R_1, \ldots, R_{n+1}$ are now non-{decreasing} exchangeable functions (as opposed to the non-{increasing}).

\begin{corollary}[Marginal RCPS, lower bound, non-decreasing case]
\label{cor:lower}
Similar to the setting in Theorem~\ref{thm:marginallower}, let $R_i \colon \mathbb{R} \rightarrow \mathbb{R}$, $i = 1, \ldots, n+1$ be exchangeable functions, where $R_i(t)$ is non-decreasing in $t$. Also, take $g \colon \mathbb{R} \rightarrow \mathbb{R}$ where $g(x)$ is non-decreasing in $x$. Further assume that $g \circ R_i$ is right-continuous, and 

\begin{equation}
    \inf_t g(R_i(t)) \geq  0, \quad\quad \sup_t g(R_i(t)) > C \geq \gamma  \text{ almost surely}.
\end{equation}

For any $\gamma \leq 0$, define the random variable $T(\gamma, g)$ as

\begin{equation}
    T(\gamma; g) := \inf \left\{t \colon \frac{1}{n+1} \sum_{i=1}^n g(R_i(t)) \geq \gamma \right\},
\end{equation}

where we define $\inf \varnothing = \infty$.
Then $\mathbb{E}[g \circ R_{n+1}(T(\gamma; g))] \geq \gamma$.

\end{corollary}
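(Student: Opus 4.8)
The plan is to run the argument of Theorem~\ref{thm:marginallower} and Corollary~\ref{cor:upper} ``in reverse'': in the non-increasing case the uniform upper bound $B$ on $g\circ L_i$ was the slack that had to be absorbed (or cancelled by the $+B$ term), and here its role is played by the uniform \emph{lower} bound $0$ on $g\circ R_i$, so no slack appears. Throughout I would write $f_i:=g\circ R_i$, so each $f_i$ is non-decreasing, right-continuous, nonnegative, with $\sup_t f_i(t)>C\ge\gamma$ almost surely, and set $T:=T(\gamma;g)=\inf\{t:\frac1{n+1}\sum_{i=1}^{n}f_i(t)\ge\gamma\}$; the goal is $\mathbb{E}[f_{n+1}(T)]\ge\gamma$.

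First I would introduce the full-sample threshold $T^\star:=\inf\{t:\frac1{n+1}\sum_{i=1}^{n+1}f_i(t)\ge\gamma\}$, which, unlike $T$, is a symmetric function of $(R_1,\dots,R_{n+1})$. Because $f_{n+1}(t)\ge 0$ we have $\frac1{n+1}\sum_{i=1}^{n+1}f_i(t)\ge\frac1{n+1}\sum_{i=1}^{n}f_i(t)$ for every $t$, hence $\{t:\frac1{n+1}\sum_{i=1}^{n}f_i(t)\ge\gamma\}\subseteq\{t:\frac1{n+1}\sum_{i=1}^{n+1}f_i(t)\ge\gamma\}$, and taking infima gives $T\ge T^\star$. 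Since $f_{n+1}$ is non-decreasing this yields $f_{n+1}(T)\ge f_{n+1}(T^\star)$ pointwise, so it suffices to prove $\mathbb{E}[f_{n+1}(T^\star)]\ge\gamma$.

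For that I would show $\psi(T^\star)\ge\gamma$ for $\psi(t):=\frac1{n+1}\sum_{i=1}^{n+1}f_i(t)$, and then symmetrize. The function $\psi$ is non-decreasing and right-continuous. The hypothesis $\sup_tf_i(t)>C\ge\gamma$ for all $i$ forces $\lim_{t\to\infty}\psi(t)=\frac1{n+1}\sum_i\sup_tf_i(t)>C\ge\gamma$, so the defining set of $T^\star$ is a nonempty interval unbounded above; when $T^\star>-\infty$, right-continuity gives $\psi(T^\star)=\lim_{s\downarrow T^\star}\psi(s)\ge\gamma$ (every $s>T^\star$ lies in that set), and when $T^\star=-\infty$ the set is all of $\mathbb{R}$, so $\psi(t)\ge\gamma$ for every $t$ and hence $\lim_{t\to-\infty}\psi(t)=\frac1{n+1}\sum_i\inf_tf_i(t)\ge\gamma$ as well. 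Finally, since $T^\star$ is symmetric in $(R_1,\dots,R_{n+1})$, the variables $f_1(T^\star),\dots,f_{n+1}(T^\star)$ are exchangeable and therefore identically distributed, whence $\mathbb{E}[f_{n+1}(T^\star)]=\frac1{n+1}\sum_{i=1}^{n+1}\mathbb{E}[f_i(T^\star)]=\mathbb{E}[\psi(T^\star)]\ge\gamma$; combining with $f_{n+1}(T)\ge f_{n+1}(T^\star)$ gives the claim.

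The only delicate points are the ones living at the boundary of the infimum: checking that the set defining $T^\star$ is nonempty (this is exactly where $\sup_tf_i(t)>C\ge\gamma$ is used) and that $\psi(T^\star)\ge\gamma$ there (this is exactly where right-continuity of $g\circ R_i$ is used), together with the harmless $T^\star=\pm\infty$ edge cases; everything else --- the subset inclusion, the monotonicity step $f_{n+1}(T)\ge f_{n+1}(T^\star)$, and the exchangeability symmetrization --- is routine. I would also remark that the argument works for any $\gamma\le C$ (the stated range $\gamma\le 0$ is in fact immediate, since then $f_{n+1}(T)\ge 0\ge\gamma$ regardless of $T$), and that it is this more general range --- after the reflection $t\mapsto-t$ that turns the $\sup$ defining $T_{k,\delta}$ into an $\inf$ and $\mathrm{FP_{max}}(\cdot,\cdot,-t)$ into a non-decreasing right-continuous function --- that is invoked when deriving the $(k,\delta)$-FP half of Theorem~\ref{thm:calibration}.
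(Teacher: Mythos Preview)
Your argument is correct and follows essentially the same route as the paper: introduce the full-sample threshold $T^\star$ (the paper calls it $T'$), use nonnegativity of $g\circ R_i$ to get $T\ge T^\star$, use monotonicity to reduce to $\mathbb{E}[f_{n+1}(T^\star)]$, and then symmetrize via exchangeability (the paper phrases this last step as conditioning on the bag $E_f$, you phrase it via identical marginals, which amounts to the same thing). Your observation that the stated range $\gamma\le 0$ makes the claim vacuous and that the intended (and actually proved) range is $\gamma\le C$ is a good catch --- indeed the paper's own application in Part~2 of Theorem~\ref{thm:calibration} uses $\gamma=1-\delta>0$.
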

\begin{proof}
Let

\begin{equation}
    T'(\gamma; g) := \inf \left\{ t \colon \frac{1}{n+1} \sum_{i=1}^{n+1} g(R_i(t)) \geq \gamma \right\}.
\end{equation}

Since $\inf_t g(R_i(t)) \geq 0$, $\sup_t g(R_i(t)) > C \geq \gamma$, $T'(\gamma; g)$ and $T(\gamma, g)$ are both well-defined almost surely. 

Since $\inf_t g(R_i(t)) \geq 0$,

\begin{equation}
    \frac{1}{n+1}\sum_{i=1}^{n} g(R_i(t)) \geq \gamma \Longrightarrow  \frac{1}{n+1}\sum_{i=1}^{n+1} g(R_i(t)) \geq \gamma.
\end{equation}

Thus, $T'(\gamma; g) \leq T(\gamma; g)$. Since $g(R_i(t))$ is non-decreasing in $t$,

\begin{equation}
\label{eq:ineq}
    \mathbb{E}[g \circ R_{n+1}(T(\gamma; g))] \geq \mathbb{E}[g \circ R_{n+1}(T'(\gamma; g))].
\end{equation}

Let $E_f$ be the unordered set (bag) of $\{R_1, \ldots, R_{n+1}\}$. Then $T'(\gamma; g)$ is a function of $E_f$, and is a constant conditional on $E_f$. Exchangeability of $R_i$ and right-continuity of $g \circ R_i$ imply

\begin{equation}
    \mathbb{E}[g \circ R_{n+1}(T'(\gamma; g)) \mid E_f] = \frac{1}{n+1} \sum_{i=1}^{n+1} g \circ R_i(T'(\gamma; g)) \geq \gamma.
\end{equation}

As this is true given any $E_f$, we can take the expectation over $E_f$ to yield
\begin{equation}
    \mathbb{E}[\mathbb{E}[g \circ R_{n+1}(T'(\gamma; g)) \mid E_f]] \geq \gamma.
\end{equation}

The proof is completed by applying Eq.~\eqref{eq:ineq}.
\end{proof}

We now prove Theorem~\ref{thm:calibration}.

\begin{proof}

By Lemma~\ref{lem:fpmax}, we have that $\mathrm{FP_{max}}(x, z, t)$ is non-decreasing in $t$. It  is also easy to verify that $\mathrm{FP_{max}}(x, z, t)$ is left-continuous in $t$ and preserves exchangeability, so that $\mathrm{FP_{max}}(X_i, Z_i, t)$ are exchangeable functions of $t$. Next, define
\begin{equation}
\label{eq:barmFP}
   \mathrm{FP^-_{max}}(x, z, t) := \mathrm{FP_{max}}(x, z, -t),
\end{equation}
so that $\mathrm{FP^-_{max}}(x, z, t)$ is non-increasing in $t$ and right-continuous. Define random variables $T'_k$ and $T'_{k, \delta}$ as
\begin{align}
    T'_k &= \inf \Big\{t \in \mathbb{R} \colon \frac{B + \sum_{i=1}^{n}\mathrm{FP^{-}_{max}}(X_i, Z_i, t)}{n+1} \leq k\Big\}\quad\text{and}\\
    T'_{k,\delta} &= \inf \Big\{t \in \mathbb{R} \colon \frac{\sum_{i=1}^{n} \mathbf{1}\{\mathrm{FP^{-}_{max}}(X_i, Z_i, t) \leq k\}}{n+1} \geq 1 - \delta \Big\}.
\end{align}

We then have $T_k = -T'_k$ and $T_{k,\delta} = -T'_{k, \delta}$, which gives
\begin{align}
    \mathbb{E}\Big[\mathrm{FP}_{\mathrm{max}}(X_{n+1}, Z_{n+1}, T_k)\Big] &= \mathbb{E}\Big[\mathrm{FP}^-_{\mathrm{max}}(X_{n+1}, Z_{n+1}, T'_k)\Big]\quad\text{and} \label{eq:kequality}\\
 \mathbb{P}\Big(\mathrm{FP}_{\mathrm{max}}(X_{n+1}, Z_{n+1}, T_{k, \delta}) \leq k \Big) &= \mathbb{P}\Big(\mathrm{FP}^-_{\mathrm{max}}(X_{n+1}, Z_{n+1}, T'_{k, \delta}) \leq k \Big).\label{eq:kdequality}
\end{align}

(Part 1) We first prove $\mathbb{E}\big[\mathrm{FP^-_{max}}(X_{n+1}, Z_{n+1}, T'_k)\big] \leq k$. 

Since $B$ is finite, we have that $\sup_t \mathrm{FP^-_{max}}(x, z, t) \leq \max_j |\mathcal{S}_j| \leq B < \infty$. As we assume nonconformity scores are finite, we also have $\inf_t \mathrm{FP^-_{max}}(x, z, t) = 0 < k \in \mathbb{R}_{>0}$. Let $L_i(t) = \mathrm{FP^-_{max}}(X_i, Z_i, t)$ and $g(x) = x$. Corollary~\ref{cor:upper} gives

\begin{equation}
    \mathbb{E}\Big[\mathrm{FP}^-_{\mathrm{max}}(X_{n+1}, Z_{n+1}, T'_k)\Big] \leq k.
\end{equation}
Substituting Eq.~\ref{eq:kequality} gives $\mathbb{E}\big[\mathrm{FP_{max}}(X_{n+1}, Z_{n+1}, T_k)\big] \leq k$.

(Part 2) We now prove $\mathbb{P}\Big(\mathrm{FP}^-_{\mathrm{max}}(X_{n+1}, Z_{n+1}, T'_{k, \delta}) \leq k \Big) \geq 1 - \delta.$ 

Let $L_i(t) = \mathbf{1}\{\mathrm{FP^-_{max}}(X_i, Z_i, t) \leq k\}$. Let $g(x) = x$. As shown earlier, $\mathrm{FP^-_{max}}(X_i, Z_i, t)$ is non-increasing, right-continuous; as a result $L_i(t)$ is non-decreasing, right-continuous. Let $\gamma = 1 - \delta \in (0, 1)$. Since $g(L_i(t)) \in \{0, 1\}$ and $V_{i,j}$ are finite, it is easy to see that we have $\sup_t g(L_i(t)) = 1 \geq \gamma$ and $\inf_t g(L_i(t)) = 0 \geq 0$.

Applying Corollary~\ref{cor:lower} gives
\begin{align}
    \mathbb{E}\big[\mathbf{1}\{\mathrm{FP^-_{max}}(X_{n+1}, Z_{n+1}, T'_{k, \delta}) \leq k\}\big] &= \mathbb{P}\Big(\mathrm{FP}^-_{\mathrm{max}}(X_{n+1}, Z_{n+1}, T'_{k, \delta}) \leq k \Big) \\
    &\geq \gamma \\
    &= 1- \delta.
\end{align}
Substituting Eq.~\ref{eq:kdequality} gives $\mathbb{P}\Big(\mathrm{FP}_{\mathrm{max}}(X_{n+1}, Z_{n+1}, T_{k, \delta}) \leq k \Big) \geq 1 - \delta.$

\end{proof}

\subsection{Proof of Proposition~\ref{prop:algo}}
\begin{proof}
 We first prove simultaneous validity of candidate sets indexed by $j \in \mathcal{J}$. By definition we have
\begin{equation}
   \mathrm{FP}(Z_{n+1}, \mathcal{S}_j) \leq \mathrm{FP_{max}}(X_{n+1}, Z_{n+1}, T_{\circ}) \quad \forall j \in \mathcal{J},
\end{equation}
which implies
\begin{align}
    \mathbb{E}\Big[\mathrm{FP}(Z_{n+1}, \mathcal{S}_j) \Big] &\leq \mathbb{E}\Big[\mathrm{FP}_{\mathrm{max}}(X_{n+1}, Z_{n+1}, T_k)\Big]\quad \text{and} \\
 \mathbb{P}\Big(\mathrm{FP}(Z_{n+1}, \mathcal{S}_j) \leq k \Big) &\geq  \mathbb{P}\Big(\mathrm{FP}_{\mathrm{max}}(X_{n+1}, Z_{n+1}, T_{k, \delta}) \leq k \Big) 
\end{align}
simultaneously $\forall j \in \mathcal{J}$. Theorem~\ref{thm:calibration} then implies validity.

We now show maximal TPR (a simple outcome). If $\mathcal{S}_j \subseteq \mathcal{S}_{j'}$ then $y_c \in \mathcal{S}_j \Longrightarrow y_c \in \mathcal{S}_{j'}$ for any  $y_c \in z \subseteq \mathcal{Y}$.  Therefore

\begin{equation}
    \mathcal{S}_j \subseteq \mathcal{S}_{j'} \Longrightarrow \mathrm{TPP}(z, \mathcal{S}_j) \leq \mathrm{TPP}(z, \mathcal{S}_{j'}).
\end{equation}

Since candidate sets are nested,
\begin{equation}
    j \leq j' \Longrightarrow \mathcal{S}_j \subseteq \mathcal{S}_{j'},
\end{equation}
and
\begin{equation}
    \mathrm{TPP}(z, \mathcal{S}_{\max \mathcal{J}}) \geq \mathrm{TPP}(z, \mathcal{S}_{j'}) \quad \forall j' \in \mathcal{J}.
\end{equation}
Since this is true for all $(x, z)$, 
\begin{equation}
    \mathbb{E}\big[\mathrm{TPP}(Z_{n+1}, \mathcal{S}_{\max \mathcal{J}})\big] = \sup_{h \in \mathcal{H}}  \mathbb{E}\big[\mathrm{TPP}(Z_{n+1}, \mathcal{S}_{h \circ \mathcal{J}})\big]
\end{equation}
where $\mathcal{H}$ is the space of all possible index selection policies.
\end{proof}

\section{Additional experimental details}

\subsection{Definition of size-stratified false positive violation}
\label{app:ssfp}

The size-stratified false positive (SSFP) violation measures the worst-case violation of our metric of interest (i.e., expectation or probability), conditioned on the \emph{size} of the output set $\mathcal{C}$. Specifically, $\mathrm{SSFP}_k$ and $\mathrm{SSFP}_{k, \delta}$ are defined as follows:

\begin{align}
    \mathrm{SSFP}_k(\mathcal{C}, \{\mathcal{A}\}_{s=1}^{a}) &:= \sup_s \max_{\phantom{j}}\bigg(\widehat{\mathbb{E}}\Big[\mathrm{FP}(Z_{n+1}, \cset(X_{n+1})) ~\Big|~ \{|\cset(X_{n+1})| \in \mathcal{A}_s\}\Big] - k, 0\bigg) \quad\text{and} \label{eq:ssFP}\\
    \mathrm{SSFP}_{k, \delta}(\mathcal{C}, \{\mathcal{A}\}_{s=1}^{a}) &:= \sup_s \max_{\phantom{j}}\bigg(\widehat{\mathbb{E}}\Big[\mathbf{1}\big\{\mathrm{FP}(Z_{n+1}, \cset(X_{n+1})) > k\big\} ~\Big|~ \{|\cset(X_{n+1})| \in \mathcal{A}_s\} \Big] - \delta, 0\bigg),
\label{eq:ssFPd}
\end{align}
where $\{\mathcal{A}\}_{s=1}^a$ forms a partition of $\{1, \ldots, |\mathcal{Y}|\}$, and  $\widehat{\mathbb{E}}$ denotes the empirical average over our test samples.

Following \citet{angelopoulos2021sets}, we show that if conditional validity holds for our objectives, then validity also holds after stratifying by set-size. Poor $\mathrm{SSFP}$ is therefore a symptom of poor conditional validity. 

In the following, we drop dependence on $n+1$ for clarity.

\begin{proposition}[Expectation case]
Suppose $\mathbb{E}[\mathrm{FP}(Z, \cset(X)) \mid X = x] \leq k$ for each $x \in \mathcal{X}$. Then,
\begin{equation}
    \mathbb{E}\Big[\mathrm{FP}(Z, \cset(X)) \mid \{|\cset(X)| \in \mathcal{A}\}\Big] \leq k,\quad\text{for any $\mathcal{A} \subset \{0, 1, 2, \ldots\}$}.
\end{equation}
\end{proposition}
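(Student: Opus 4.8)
The plan is to reduce the stratified statement to the pointwise hypothesis by conditioning. The key observation is that the event $\{|\cset(X)| \in \mathcal{A}\}$ is a function of $X$ alone: once we know $x$, the prediction set $\cset(x)$ is determined (it depends on the calibration data, but conditionally on that data it is a deterministic function of $x$), and hence so is its size. So the event we are conditioning on is measurable with respect to $\sigma(X)$.

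First I would write the conditional expectation $\mathbb{E}\big[\mathrm{FP}(Z, \cset(X)) \mid \{|\cset(X)| \in \mathcal{A}\}\big]$ using the tower property, conditioning additionally on $X$:
\begin{equation}
\mathbb{E}\Big[\mathrm{FP}(Z, \cset(X)) \mid \{|\cset(X)| \in \mathcal{A}\}\Big] = \mathbb{E}\Big[\, \mathbb{E}[\mathrm{FP}(Z, \cset(X)) \mid X] \;\Big|\; \{|\cset(X)| \in \mathcal{A}\}\Big].
\end{equation}
This is valid because $\sigma(X) \supseteq \sigma(\mathbf{1}\{|\cset(X)| \in \mathcal{A}\})$, so the inner conditioning on $X$ refines the outer one. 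Then I would apply the pointwise hypothesis $\mathbb{E}[\mathrm{FP}(Z, \cset(X)) \mid X = x] \leq k$ for every $x$, which bounds the inner expectation by the constant $k$ almost surely; taking the outer conditional expectation of a quantity that is $\leq k$ a.s.\ yields $\leq k$. This gives the claim for arbitrary $\mathcal{A} \subset \{0,1,2,\ldots\}$, in particular for each cell $\mathcal{A}_s$ of the partition, which is what is needed to conclude $\mathrm{SSFP}_k = 0$.

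The only real subtlety — and the step I would be most careful about — is justifying that $\{|\cset(X)| \in \mathcal{A}\}$ is $\sigma(X)$-measurable, i.e.\ that conditioning is done at the right level. In the split conformal setup the calibration data is held fixed (or one conditions on it throughout), so $\cset(\cdot)$ is a fixed set-valued map and the argument goes through cleanly; I would state this assumption explicitly. Everything else is the standard ``conditional bound implies marginal bound on any sub-$\sigma$-algebra event'' manipulation, so no genuine obstacle remains. (The analogous $(k,\delta)$ statement, presumably stated next, follows identically with $\mathrm{FP}(Z,\cset(X))$ replaced by $\mathbf{1}\{\mathrm{FP}(Z,\cset(X)) > k\}$ and $k$ by $\delta$.)
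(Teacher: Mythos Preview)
Your proposal is correct and follows essentially the same approach as the paper: both arguments hinge on the fact that $\{|\cset(X)|\in\mathcal{A}\}$ is $\sigma(X)$-measurable, then apply the tower property and the pointwise bound. The paper writes it out explicitly as $\mathbb{E}[\mathrm{FP}\cdot\mathbf{1}\{|\cset(X)|\in\mathcal{A}\}]/\mathbb{P}(|\cset(X)|\in\mathcal{A})$, pulls the indicator out of the inner conditional expectation, and integrates; your version using nested conditional expectations is the same computation in slightly more abstract language, and your explicit flag of the measurability assumption is a point the paper leaves implicit.
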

\begin{proof}
\begin{align}
    \mathbb{E}[\mathrm{FP}(Z, \cset(X)) \mid \{|\cset(X)| \in \mathcal{A}\}]
        &= \frac{\mathbb{E}[\mathrm{FP}(Z, \cset(X)) \cdot \mathbf{1}{\{|\cset(X)| \in \mathcal{A}\}}]}{\mathbb{P}(|\cset(X)| \in \mathcal{A})} \\
        &= \frac{\mathbb{E}[\mathbb{E}[\mathrm{FP}(Z, \cset(X)) \cdot \mathbf{1}{\{|\cset(X)| \in \mathcal{A}\}} \mid X = x]]}{\mathbb{P}(|\cset(X)| \in \mathcal{A})} \\
        &= \frac{\int_x \mathbb{E}[\mathrm{FP}(Z, \cset(X)) \mid X = x]  \cdot \mathbf{1}{\{|\cset(x)| \in \mathcal{A}\}} d\mathbb{P}(x)}{\mathbb{P}(|\cset(X)| \in \mathcal{A})} \\
        &\leq \frac{ \int_x k \cdot \mathbf{1}{\{|\cset(x)| \in \mathcal{A}\}} d\mathbb{P}(x)}{\mathbb{P}(|\cset(X)| \in \mathcal{A})} \\
        & = k.
\end{align}
\end{proof}

\begin{proposition}[Probability case.] Suppose $\mathbb{P}( \mathrm{FP}(Z, \csetd(X)) \leq k \mid X = x) \geq 1 - \delta$ for each $x \in \mathcal{X}$. Then,
\begin{equation}
    \mathbb{P}\Big(\mathrm{FP}(Z, \csetd(X)) \leq k \mid \{|\csetd(X)| \in \mathcal{A}\}\Big) \geq 1 - \delta,\quad\text{for any $\mathcal{A} \subset \{0, 1, 2, \ldots\}$}.
\end{equation}
\end{proposition}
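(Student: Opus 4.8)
The plan is to mirror the proof of the expectation case almost verbatim, since the two statements are structurally identical: a conditional guarantee holding for every $x\in\mathcal{X}$ implies the same guarantee after conditioning on any event defined by the (deterministic, given $x$) set size $|\csetd(x)|$. First I would rewrite the conditional probability on the left-hand side as a ratio, $\mathbb{P}(\mathrm{FP}(Z,\csetd(X))\le k \mid \{|\csetd(X)|\in\mathcal{A}\}) = \mathbb{E}[\mathbf{1}\{\mathrm{FP}(Z,\csetd(X))\le k\}\cdot\mathbf{1}\{|\csetd(X)|\in\mathcal{A}\}]/\mathbb{P}(|\csetd(X)|\in\mathcal{A})$, exactly as in the expectation-case proof but with the indicator $\mathbf{1}\{\mathrm{FP}(Z,\csetd(X))\le k\}$ playing the role of $\mathrm{FP}(Z,\cset(X))$.

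Next I would apply the tower property to condition on $X=x$ inside the numerator, pulling the factor $\mathbf{1}\{|\csetd(x)|\in\mathcal{A}\}$ out of the inner expectation because it is a deterministic function of $x$ (the prediction set depends only on $x$ and the fixed calibration data, so its size is constant given $x$). This turns the numerator into $\int_x \mathbb{P}(\mathrm{FP}(Z,\csetd(X))\le k \mid X=x)\cdot\mathbf{1}\{|\csetd(x)|\in\mathcal{A}\}\, d\mathbb{P}(x)$. Then I invoke the hypothesis $\mathbb{P}(\mathrm{FP}(Z,\csetd(X))\le k \mid X=x)\ge 1-\delta$ pointwise to lower-bound the integrand, yielding a lower bound of $(1-\delta)\int_x \mathbf{1}\{|\csetd(x)|\in\mathcal{A}\}\,d\mathbb{P}(x) = (1-\delta)\,\mathbb{P}(|\csetd(X)|\in\mathcal{A})$, which cancels the denominator to give $1-\delta$.

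Here is the proof:

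\begin{proof}
\begin{align}
    \mathbb{P}(\mathrm{FP}(Z, \csetd(X)) \leq k \mid \{|\csetd(X)| \in \mathcal{A}\})
        &= \frac{\mathbb{E}[\mathbf{1}\{\mathrm{FP}(Z, \csetd(X)) \leq k\} \cdot \mathbf{1}{\{|\csetd(X)| \in \mathcal{A}\}}]}{\mathbb{P}(|\csetd(X)| \in \mathcal{A})} \\
        &= \frac{\mathbb{E}[\mathbb{E}[\mathbf{1}\{\mathrm{FP}(Z, \csetd(X)) \leq k\} \cdot \mathbf{1}{\{|\csetd(X)| \in \mathcal{A}\}} \mid X = x]]}{\mathbb{P}(|\csetd(X)| \in \mathcal{A})} \\
        &= \frac{\int_x \mathbb{P}(\mathrm{FP}(Z, \csetd(X)) \leq k \mid X = x)  \cdot \mathbf{1}{\{|\csetd(x)| \in \mathcal{A}\}} \, d\mathbb{P}(x)}{\mathbb{P}(|\csetd(X)| \in \mathcal{A})} \\
        &\geq \frac{ \int_x (1 - \delta) \cdot \mathbf{1}{\{|\csetd(x)| \in \mathcal{A}\}} \, d\mathbb{P}(x)}{\mathbb{P}(|\csetd(X)| \in \mathcal{A})} \\
        & = 1 - \delta.
\end{align}
\end{proof}

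I do not anticipate a genuine obstacle, since this is a routine conditioning argument; the only point requiring mild care is justifying that $\mathbf{1}\{|\csetd(x)|\in\mathcal{A}\}$ can be treated as constant given $X=x$ and hence extracted from the inner expectation. This is immediate under split conformal prediction because the predictor is a fixed measurable function of $x$ together with the (here, fixed) calibration sample, so no subtlety about randomness in $\csetd$ arises beyond what the excerpt already implicitly assumes in the expectation-case proof.
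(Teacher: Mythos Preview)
Your proof is correct and follows essentially the same approach as the paper's own proof; you simply include two extra intermediate lines (the ratio definition of conditional probability and the tower property) that the paper skips, before arriving at the identical integral representation and applying the hypothesis pointwise.
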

\begin{proof}
\begin{align}
\mathbb{P}(\mathrm{FP}(Z, \csetd(X)) \leq k \mid \{|\csetd(X)| \in \mathcal{A}\})
    &= \frac{\int_x \mathbb{P}( \mathrm{FP}(Z, \csetd(X)) \leq k \mid X = x) \cdot \mathbf{1}\{|\csetd(x)| \in \mathcal{A}\}d\mathbb{P}(x)}{\mathbb{P}(|\csetd(X)| \in \mathcal{A})} \\
    &\geq \frac{\int_x (1 - \delta) \cdot \mathbf{1}\{|\csetd(x)| \in \mathcal{A}\}d\mathbb{P}(x)}{\mathbb{P}(|\csetd(X)| \in \mathcal{A})} \\
    &= 1 - \delta.
\end{align}
\end{proof}

\subsection{Additional results}
Table~\ref{tab:chemres} reports the absolute results for a number of reference $k$ values on the in-silico screening task. For $k$-FP validity, we report the empirical average of false positives in the prediction sets. For $(k, \delta)$-FP validity we report the percentage of prediction sets with $\le k$ false positives.

We also include the experimental results of the Outer Sets @ 90 baseline (\S\ref{sec:baselines}) on the three evaluated tasks in Table~\ref{tab:conformalcomparison}.

\label{app:additionalexperiments}

\begin{table*}[!h]
\centering
\small
\caption{Results for the in-silico screening task on the ChEMBL dataset. TPR and $\mathrm{FP} \leq k$ are expressed as percents. Our FP-CP methods meet our target thresholds; using the Inner Sets approach does too, but is conservative (as expected). Applying FP-CP calibration with our $\mathrm{DeepSets}$ model (NN) yields  substantially higher TPR across various tolerance levels.}
\label{tab:chemres}

\begin{tabular}{@{}lrr|rr|rr|rr|rr@{}}
\toprule
{\ul } &
  \multicolumn{2}{c}{\textbf{Top k}} &
  \multicolumn{2}{c}{\textbf{Inner Sets}} &
  \multicolumn{2}{c}{\textbf{FP-CP (Max)}} &
  \multicolumn{2}{c}{\textbf{FP-CP (Sum)}} &
  \multicolumn{2}{c}{\textbf{FP-CP (NN)}} \\
  \multicolumn{10}{l}{\textit{\underline{$k$-FP}}:} &  \\[1.2mm]
\textit{} &
  \textit{Avg. $\mathrm{FP}$} &
  \textit{TPR  } &
  \textit{Avg. $\mathrm{FP}$} &
  \textit{TPR  } &
  \textit{Avg. $\mathrm{FP}$} &
  \textit{TPR  } &
  \textit{Avg. $\mathrm{FP}$} &
  \textit{TPR  } &
  \textit{Avg. $\mathrm{FP}$} &
  \textit{TPR  } \\ \midrule
$k=5$    & 4.59  & 29.8 & 0.14  & 2.5  & 4.98  & 27.5 & 4.99  & 34.1 & 4.98  & {36.1} \\
$k=15$   & 14.47 & 53.4 & 0.88  & 9.5  & 14.98 & 50.7 & 14.99 & 58.8 & 14.99 & {59.9} \\
$k = 25$ & 24.51 & 68.0 & 1.49  & 13.4 & 24.98 & 66.8 & 24.99 & 73.1 & 24.99 & {73.2} \\
$k = 35$ & 34.54 & 78.2 & 2.45  & 18.4 & 34.97 & 78.4 & 34.99 & {82.6} & 34.99 & {82.5} \\ \midrule
\multicolumn{10}{l}{\textit{\underline{$(k, \delta)$-FP with $1-\delta = 0.9$}}:} &  \\[1.2mm]
\textit{} &
  \textit{$\mathrm{FP} \leq k$  } &
  \textit{TPR  } &
  \textit{$\mathrm{FP} \leq k$  } &
  \textit{TPR  } &
  \textit{$\mathrm{FP} \leq k$  } &
  \textit{TPR  } &
  \textit{$\mathrm{FP} \leq k$  } &
  \textit{TPR  } &
  \textit{$\mathrm{FP} \leq k$  } &
  \textit{TPR  } \\ \midrule
$k=5$    & 100.0 & 20.5 & 96.6  & 6.36 & 90.0  & 15.8 & 90.0  & 27.2 & 90.0  & {31.6} \\
$k=15$   & 94.7  & 42.4 & 99.5  & 6.36 & 90.0  & 26.7 & 90.0  & 47.4 & 90.0  & {55.3} \\
$k = 25$ & 96.6  & 55.7 & 100.0 & 6.36 & 90.0  & 37.4 & 90.0  & 62.3 & 90.0  & {69.0} \\
$k = 35$ & 97.5  & 66.2 & 100.0 & 6.36 & 90.0  & 49.1 & 90.0  & 74.0 & 90.0  & {79.0}\\
\bottomrule
\end{tabular}

\end{table*}

\begin{table}[!h]
\centering

\caption{Outer Sets results applied at coverage level $1 - \epsilon = 0.90$. Note that since some examples do \emph{not} have any positives, full coverage in the typical sense isn't always achievable or well-defined.}
\label{tab:conformalcomparison}

\begin{tabular}{@{}lccc@{}}
\toprule
Task & \multicolumn{1}{l}{TPR} & \multicolumn{1}{l}{Avg. FP} & \multicolumn{1}{l}{Avg. Size} \\ \midrule
In-silico screening & 97.2 & 63.6  & 86.6 \\
Object detection    & 96.1 & 32.4  & 38.2 \\
Entity extraction   & 75.0 & 0.77 & 2.31 \\ \bottomrule
\end{tabular}%

\end{table}
\section{Implementation and dataset details}
\label{app:tasks}

\newpar{In-silico screening} We construct a molecular property screening task using the ChEMBL dataset~\cite[see][]{mayr}. Given a specified constraint such as ``\emph{is active for property A but not property B},'' we want to retrieve at least one molecule from a given set of candidates that satisfies this constraint. The input for each molecule is its SMILES string, a notational format that specifies its molecular structure. The motivation of this task is to simulate in-silico screening for drug discovery, where it is often the case where chemists are searching for a new molecule that satisfies several constraints (such as toxicity and efficacy limits), out of a pool of many possible candidates.

We split the ChEMBL dataset into a 60-20-20 split of molecules, where 60\% of molecules are separated into a train set, 20\% into a validation set, and 20\% into a test set. Next, we take all properties that have at least 50 positive and negative examples (to avoid highly imbalanced properties). Of these properties, we take all N choose K combinations that have at least 100 molecules with all K properties labelled (ChEMBL has many missing values). We set K to 2. For each combination, we randomly sample an assignment for each property (i.e., $\{\text{active}, \text{inactive}\}^K$). We discard combinations for which more than 90\% of labeled molecules satisfy the constraint.
 We keep 5000 combinations for the test set, 767 for validation, and 4375 for training. The molecules for each of the combinations are only sourced from their respective splits (i.e., molecular candidates for constraints in the property combination validation split only come from the molecule validation split). Therefore, at inference time, given a combination we have never seen before, on a molecules we have never seen before, we must try to retrieve the molecules that have the desired combination assignment.
 
Our directed Message Passing Neural Network (MPNN) is implemented using the $\mathtt{chemprop}$ repository~\cite{chemprop}. The MPNN model uses graph convolutions to learn a deep molecular representation, that is shared across property predictions. Each property value (active/inactive) is predicted using an independent classifier head. The final prediction is based on an ensemble of 5 models, trained with different random seeds. Given a combination assignment $(Z_1 = z_1, \ldots, Z_k = z_k)$, we naively compute the joint likelihood independently, i.e., 
\begin{equation}
    p_\theta(Z_1 = z_1, \ldots, Z_k = z_k) = \prod p_{\theta}(Z_i = z_i).
\end{equation}

\newpar{Object detection} As discussed in \S\ref{sec:setup}, we use the MS-COCO dataset~\cite{Lin2014MicrosoftCC} to evaluate our conformal object detection. MS-COCO consists of images of complex everyday scenes containing 80 object categories (such as person, bicycle, dog, car, etc.), multiple of which may be contained in any given example. Since the official test set is hidden, we use the $5k$ examples from the development set and randomly partition them into sets of size 1$k$, 1$k$, and 3$k$ for calibration, validation, and testing, respectively. The EfficientDet model~\cite{Tan2020EfficientDetSA}\footnote{We use tf\_efficientdet\_d2 from \url{https://github.com/rwightman/efficientdet-pytorch}.} for extracting bounding boxes uses a pipeline of three neural networks to extract deep features, and then predict candidates. The model also uses a non-maximum suppression (NMS) post-processing step to reduce the total number of predictions by keeping only the one with the maximum score across highly overlapping prediction boxes. We merge the predictions of all classes into a unified set, where each element is a tuple of (class, bounding box). This means that multiple class predictions can be included for the same bounding box (i.e., there is class uncertainty), and multiple bounding boxes can be found for the same class (i.e., there are multiple objects in one image). We define true positives as predictions that have an intersection over union (IoU) value $>0.5$ with a gold bounding box annotation, \emph{and} that match the annotation's class. 


\newpar{Entity extraction} Entity extraction is a popular task in natural language processing. Given a sentence, such as ``\emph{Barack Obama was born in Hawaii},'' the goal is to identify and classify all named entities that appear---i.e., (``Barack Obama'', Person) and (``Hawaii'', Location). We use the CoNLL NER dataset~\cite{tjong-kim-sang-de-meulder-2003-introduction}, and extract $1k$ examples for calibration out of the $3.3k$ development set, and report test results on the $3.5k$ test set. For our base model, we use the entity extraction module of PURE~\cite{zhong-chen-2021-frustratingly}, that predicts span scores with a classifier head on top of Albert-base~\cite{Lan2020ALBERT} contextual embeddings. The classification head has two non-linear layers and uses the learned contextual embeddings of the span start and end tokens, concatenated with a learned span width embedding. We train the model on the training set of the CoNLL NER dataset. We use the official code repository\footnote{\url{https://github.com/princeton-nlp/PURE}.} and the following parameters: $1e-5$ learning rate, $5e-4$ task learning rate, 32 train batch size, and 100 context window. Similar to our object detection task, we treat exact span predictions of the correct category as true positives, and any other entity predictions as false positives. As illustrated in Table~\ref{tab:stats}, a fairly large number of sentences do not contain any entities at all, while other sentences may contain several.


\begin{table}[!hbpt]
\centering
\caption{Dataset statistics (test split). Numbers are reported with respect to the top $B = 100$ candidates per example. The median number of positives and negatives per example is given, in addition to their $16$th and $84$th percentiles. We also give statistics for the percentage of examples that have ``empty'' label sets with no positives (i.e., the label set has $|z| = 0$).}
\label{tab:stats}

\begin{tabular}{@{}llcccc@{}}
\toprule
Dataset & Input & \multicolumn{1}{l}{\# Examples} & \multicolumn{1}{l}{\# Negatives} & \multicolumn{1}{l}{\# Positives} & \multicolumn{1}{l}{\% Empty} \\ \midrule
In-silico screening & SMILES     & 5,000 & 85 (50-97)   & 15 (3-50) & 0.0  \\
Object detection    & Image            & 3,000 & 96  (89-98)  & 4 (2-11)  & 1.1  \\
Entity extraction   & Text & 3,453 & 99  (97-100) & 1 (0-3) & 20.2 \\ \bottomrule
\end{tabular}%

\end{table}

\newpar{Code} Our code will be available soon at \url{https://github.com/ajfisch/conformal-fp}.
\section{Practical considerations and limitations}
\label{app:practical}

In this section we address a number of practical considerations, limitations, and extensions for our FP-CP method.

\subsection{Choosing a suitable k}
\label{sec:choosek}
An outstanding question a practitioner faces is how to choose the value of $k$ for $k$-FP and $(k, \delta)$-FP objectives. The value of $k$ in our method has a reliable and easy interpretation: it is the total number of incorrect answers. For many tasks, such as in-silico screening, there is a direct relation between the number of noisy predictions (e.g., failed experiments conducted during wet-lab validation) and total ``wasted'' cost. Therefore, for example, given some approximate budget $Q$ and cost per noisy prediction $c$, a reasonable approach is to then set $k \approx Q / c$. Of course, the advantage of our approach is that the user may set $k$ to whatever they wish---this might change based on their needs, and is not part of our algorithm.

\subsection{Choosing between k-fwer and fdr control}

A related question to \ref{sec:choosek} is when to target $k$-FWER (i.e., our $k$-FP and $(k, \delta)$-FP objectives) or FDR (e.g., using \citet{anastasios-learning-2021}). This choice is well discussed in the multiple testing literature~\cite{lehmann-fwer-2005, romano-control-2007, 10.1093/bioinformatics/btp385}. An important aspect to consider is the size of the label space $\mathcal{Y}$, natural rate of true and false positives, and the efficiency of the base model at separating true positives from false positives. When the total number of true positives  is large and $|\mathcal{Y}|$ is large then it is reasonable to control the FDR. If, however, the natural rate of true positives is low, or they are not well separated from false positives, then the FDR can be high and hard to control. This is especially true for smaller prediction sets (as the ratio of positive to negative labels can be quickly driven down even with the addition of only a few false positives). For illustration, suppose for a given example there is one true positive that is ranked 10th by the base model. For many applications, 10 total predictions (with 9 false positives) is acceptable. Yet, the lowest FDR cutoff that allows for this positive to be discoverable is $0.9$ (which, for other examples, may allow for hundreds of false positives---an outcome which is undesirable for some applications, even given a high number of accompanying true positives). To satisfy a lower FDR, the algorithm must output an empty set (with FDR = 0). This remains true even if there are a few (but not many) other true positives: for instance, in the previous example, if predictions 10-20 were also all true positives then the lowest FDR is still only 0.5---specifying a FDR tolerances any lower than this would force an  empty set prediction.

\subsection{Learning more expressive set functions}

Our choice of $\mathrm{DeepSets}$ model is motivated by its property of being a universal approximator for continuous set functions, and by its efficient architecture. Of course, its realized accuracy depends on its exact parameterization and optimization. In terms of input features, in \S\ref{sec:deepsets}, we chose a simplistic $\phi(x, y_c)$ for two reasons: (1) we view it’s low complexity as an advantage (practitioners can simply plug-in individual multi-label probabilities, or other scalar conformity scores, that most out-of-the-box methods provide into a general framework without having to do any more work for providing additional features), and (2) it is easy to train this light-weight model on smaller amounts of data. Still, this approach can discard potentially helpful information about the input $x$, and any dependencies between labels $y_c$ and $y_c'$. For example, if $y_c$ and $y_c'$ are mutually exclusive, then the number of false positives if both are included in $\mathcal{S}$ is at least $1$. Using more expressive $\phi$ that is able to capture and take advantage of this sort of side information about $x$ and $y_c$ is a subject for future work.

\subsection{Constructing non-nested candidate sets}

We choose to construct nested prediction sets because they are efficient and effective. It is useful to emphasize, however, that nestedness is not necessary for our calibration framework: our procedure still works even when candidate sets are not nested. It only relies on $\mathrm{FP_{max}}$ remaining monotonic in $t$, which is preserved even for non-nested candidate sets.  That said, generally speaking, considering individual candidates in the order of individual likelihood is a good strategy: this maximizes the expected number of true positives in a set of fixed size. Of course, we are not ranking by the true marginal likelihood, but rather the estimate, $p_{\theta}(y_c \in Z \mid x)$, and this may introduce some error. In theory, the set function $\mathcal{F}$ may be able to identify higher quality outputs sets $\mathcal{S} \in 2^\mathcal{Y}$ by jointly considering all of the included elements (rather than ranking them one-by-one). That said, an unconstrained search process over $2^\mathcal{Y}$ is expensive. Furthermore, identifying the final output set with maximal TPR, as we show we do in Proposition~\ref{prop:algo}, is no longer trivial. Nevertheless, this is a promising area for future work, can potentially be combined with efficient search or pruning methods (e.g., such as in \citet{fisch2021admission}).

\end{document}